\newcommand\op[1]{\operatorname{#1}}
\newcommand{\eg}{\textit{e.g., }}
\newcommand{\ie}{\textit{i.e., }}
\def\eqref#1{equation~\ref{#1}}
\def\1{\bm{1}}
\def\va{{\bm{a}}}
\def\vb{{\bm{b}}}
\def\vu{{\bm{u}}}
\def\vv{{\bm{v}}}
\def\vx{{\bm{x}}}
\def\vz{{\bm{z}}}
\def\mK{{\bm{K}}}
\def\mW{{\bm{W}}}
\DeclareMathAlphabet{\mathsfit}{\encodingdefault}{\sfdefault}{m}{sl}
\SetMathAlphabet{\mathsfit}{bold}{\encodingdefault}{\sfdefault}{bx}{n}
\def\sN{{\mathbb{N}}}
\def\sR{{\mathbb{R}}}
\newtheorem{theorem}{Theorem}[section]
\newtheorem{proposition}[theorem]{Proposition}
\newtheorem{lemma}[theorem]{Lemma}
\newtheorem{corollary}[theorem]{Corollary}
\newtheorem{definition}[theorem]{Definition}
\newtheorem{remark}{Remark}
\newtheorem{example}{Example}
\definecolor{mydarkblue}{rgb}{0,0.08,0.45}
\title{Memorization Capacity for Additive Fine-Tuning with Small ReLU Networks}
\author[1]{Jy-yong Sohn\thanks{Equal Contribution}}
\newcommand\CoAuthorMark{\footnotemark[\arabic{footnote}]}
\author[2,3]{Dohyun Kwon\protect\CoAuthorMark}
\author[1]{Seoyeon An}
\author[4]{Kangwook Lee}
\affil[1]{
    Department of Statistics and Data Science\\
    Yonsei University\\
    Republic of Korea
}
\affil[2]{
    Department of Mathematics\\
    University of Seoul\\
    Republic of Korea
}
\affil[3]{
    Center for AI and Natural Sciences\\
    Korea Institute for Advanced Study\\
    Republic of Korea
}
\affil[4]{
    Department of Electrical and Computer Engineering\\
    University of Wisconsin-Madison\\
    WI, USA
  }
\begin{document}
\maketitle

\begin{abstract}
  Fine-tuning large pre-trained models is a common practice in machine learning applications, yet its mathematical analysis remains largely unexplored.
In this paper, we study fine-tuning through the lens of memorization capacity. Our new measure, the Fine-Tuning Capacity (FTC), is defined as the maximum number of samples a neural network can fine-tune, or equivalently, as the minimum number of neurons ($m$) needed to arbitrarily change $N$ labels among $K$ samples considered in the fine-tuning process. 
In essence, FTC extends the memorization capacity concept to the fine-tuning scenario. We analyze FTC for the \textit{additive} fine-tuning scenario where the fine-tuned network is defined as the summation of the frozen pre-trained network $f$ and a neural network $g$ (with $m$ neurons) designed for fine-tuning. 
When $g$ is a ReLU network with either 2 or 3 layers, we obtain tight upper and lower bounds on FTC; 
we show that $N$ samples can be fine-tuned with $m=\Theta(N)$ neurons for 2-layer networks, and with $m=\Theta(\sqrt{N})$ neurons for 3-layer networks, no matter how large $K$ is.  
Our results recover the known memorization capacity results when $N = K$ as a special case. 
\end{abstract}

\vspace{-3mm}
\section{Introduction}
\vspace{-3mm}
As a branch of machine learning theory, the expressive power of neural networks is investigated for several decades. 
By using the concept of universal approximation, it is shown that neural networks can approximate a large classes of functions, either in the depth-bounded scenarios~\citep{cybenko1989approximation,funahashi1989approximate,hornik1989multilayer,barron1993universal} or width-bounded scenarios~\citep{lu2017expressive,hanin2017approximating,kidger2020universal,park2020minimum}. 
Another line of research focused on the memorization capacity of neural networks~\citep{baum1988capabilities,huang1998upper,huang2003learning,yun2019small,vershynin2020memory,rajput2021exponential,vardi2021optimal}, exploring the capability of neural networks for memorizing finite samples. 

Meanwhile, with the advent of large language models~\citep{brown2020language,openai2023gpt,ouyang2022training,chowdhery2022palm,zhang2022opt,touvron2023llama} and foundation models~\citep{bommasani2021opportunities,radford2021learning,ramesh2022hierarchical}, the paradigm of pre-training followed by fine-tuning is dominating the machine learning communities. Various empirical results show that a gigantic model pre-trained on large amount of data can be easily fine-tuned to perform well on downstream tasks, given only a small amount of additional data for the target task. 
Compared with the extensive empirical results, mathematical analysis on fine-tuning large pre-trained models remains largely unexplored.

In this paper, we take the first step in understanding the fine-tunability of pre-trained networks through the lens of memorization capacity. 
We focus on the scenario where we fine-tune a pre-trained neural network $f$ on dataset $D = \{(\vx_i, y_i)\}_{i=1}^K$ with $K$ samples; here, $\vx_i \in \sR^{d}$ and $y_i \in \sR$ for all $i \in [K]$ where $[K] =\{1,2,\cdots, K\}$, and we assume $\vx_i \ne \vx_j$ for all $i \ne j$.
Let 
    $T := \{ i \in [K]: f(\vx_i) \ne y_i \}$
be the set of indices of samples that the pre-trained network $f$ does not fit. The cardinality of this set is denoted by $N:= |T| \leq K$.
In other words, 
\begin{align}
f(\vx_i) &= y_i 
\label{eqn:new_dataset_label2}
\end{align}
holds for all $i \in [K] \setminus T$, while not guaranteed for $i \in T$.
Our aim is to add a neural network $g_\theta$ (parameterized by $\theta$) to the pre-trained network $f$ in a way that the fine-tuned network $f + g_{\theta}$ satisfies
\begin{align}
\label{eq:fine}
    (f + g_{\theta}) (\vx_i) = y_i, \quad \forall i \in [K].
\end{align}
See Fig.~\ref{fig:prob-formulation} for the visualization of the \textit{additive} fine-tuning scenario we focus on.
This scenario is motivated by recently proposed additive fine-tuning methods ~\citep{zhang2020side, fu2021learn, cao2022attention}, and especially, the side-tuning ~\citep{zhang2020side} where a side network $g_{\theta}$ is added to the pre-trained network $f$. Since our model does not cover other popular fine-tuning methods including LoRA ~\citep{hu2021lora}, extending our theoretical results to such popular methods is remained as a future work.
Under such setting, we define the fine-tuning capacity (FTC) of a neural network $g_{\theta}$ as below. 

\begin{definition}[FTC]
\label{def:ftc}
    The fine-tuning capacity of a given neural network $g_{\theta}$ is the maximum number $N$ satisfying the following property: for all $\vx_i \in \sR^d$, $y_i \in \sR$, for all $T \subseteq [K]$ satisfying $\lvert T \rvert = N$, and for any choices of function $f$ satisfying $f(\vx_i) = y_i$ for all $i \in [K]\setminus T$, 
    there exists parameter $\theta$ such that $(f + g_{\theta})(\vx_i) = y_i$ all $i \in [K]$.
\end{definition}

\begin{figure}[t!]
    \centering
    \includegraphics[width=7cm]{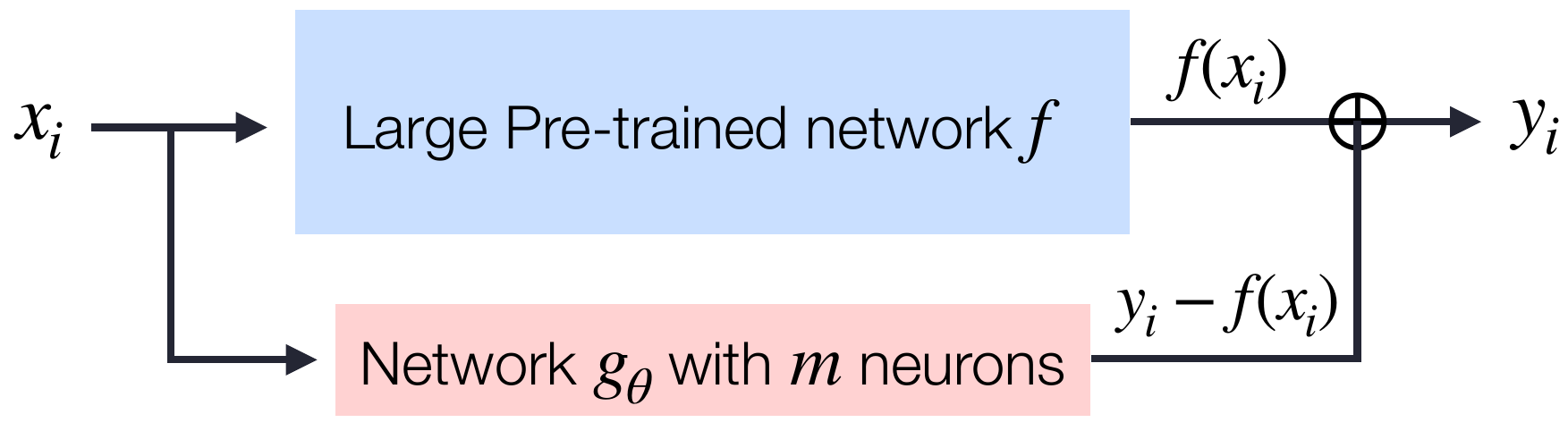}
    \caption{
    Additive fine-tuning scenario where the pre-trained network $f$ is fine-tuned to $f+g_{\theta}$,
    in order to fit the dataset $D=\{(\vx_i, y_i)\}_{i=1}^K$. Here, 
    the pre-trained network already fits $N$ samples $\{(\vx_i, y_i) \}_{i \in [K]\setminus T}$, where $T \subseteq [K]$ is the set of indices where $y_i \ne f(\vx_i)$.
    We use $g_{\theta}$ to fill the gap between $f(\vx_i)$ and $y_i$, for $i \in T$.
    }
    \label{fig:prob-formulation}
\end{figure}

Under such a setting, we establish the upper/lower bounds on FTC, when $g$ is 2-layer ReLU network or 3-layer ReLU network. 
Our main contributions are summarized below:
\begin{itemize}
    \item We define a new metric called Fine-Tuning Capacity (FTC), which measures the maximum number of samples $N^{\star}$ a neural network with $m$ neurons can fine-tune. Equivalently, we define the minimum number of neurons $m^{\star}$ needed to arbitrarily change $N$ labels among $K$. FTC can be considered as an extension of memorization capacity, tailored for the fine-tuning scenario. 
    \item 
    For 2-layer ReLU networks, we establish tight upper and lower bounds on $m^{\star}$, in Theorem~\ref{thm:ftc_add_bound}. 
    The upper bound is obtained by a novel neural network construction for fine-tuning.
    Our construction requires less number of neurons than conventional constructions developed in the memorization capacity literature when $K \ge 3N+2$. 
    By using our bounds on $m^{\star}$, we also provide an equivalent statement in Corollary~\ref{coro:ftc}, showing the tight bounds on the fine-tuning capacity $N^{\star}$ .
    \item For 3-layer ReLU networks, we obtain tight upper and lower bounds on $m^{\star}$ in Theorem~\ref{thm:ftc_add_bound_3layer}. Our results imply that 
    $N$ samples can be fine-tuned with $m = \Theta(\sqrt{N})$ neurons without any dependence on $K$.
    We also provide an equivalent statement in Corollary~\ref{coro:ftc3NN}, showing the tight upper and lower bounds on $N^{\star}$ .
\end{itemize}

\section{Related Works}
\vspace{-3mm}

\paragraph{Fine-Tuning}

Various methods for efficient fine-tuning are introduced in recent years~\citep{houlsby2019parameter,zhang2020side,zaken2021bitfit,he2021towards}, which fine-tune only a small part of pre-trained models to adapt it for target tasks. 
There are some mathematical analysis on fine-tuning~\citep{wu2022power,zeng2023expressive,giannou2023expressive,englert2022adversarial,oymak2023role,du2020few,malladi2023kernel} or more broadly on transfer learning~\citep{tripuraneni2020theory,maurer2016benefit}, but none of them analyzed the fine-tunability of large pre-trained models using the lens of memorization capacity.

\paragraph{Memorization}

One concept relevant to FTC is \emph{memorization capacity} which measures the ability of memorizing given feature-label pairs $\{(\vx_i, y_i)\}_{i=1}^K$. 
Finding the bounds on the memorization capacity is considered in recent works on various networks~\citep{zhang2016understanding,yun2019small,vershynin2020memory,rajput2021exponential,nguyen2018optimization,hardt2016identity,kim2023provable,madden2024memory,madden2024upper,madden2024memory2}. 
The effect of memorization in large language models is explored in recent works, both for pre-training~\citep{carlini2022quantifying,ippolito2022preventing} and for fine-tuning~\citep{zeng2023exploring}.

\section{Fine-Tuning Capacity}
\vspace{-3mm}

Note that a notion of additive FTC given in Definition~\ref{def:ftc} contains the pre-trained network $f$, but one can confirm that FTC does not depend on $f$ since $f(\vx_i)  = y_i$ holds for all $i \in [K]\setminus T$, for every pre-trained network $f$ we are considering. Below we provide an equivalent simpler definition.

\begin{definition}[FTC, equivalent form]
    For a given positive integer $K$, the fine-tuning capacity (FTC) of a given neural network $g_\theta$ is 
    \begin{align}
        N_{\op{FTC}}^{\star}(g,K) &:= \max_{N \in \{0, 1,\cdots, K\}} N \emph{ such that } \nonumber\\
        &\forall T \subseteq [K] \emph{ with } \lvert T \rvert = N, \forall \vx_i \in \mathbb{R}^d, \forall z_i \in \mathbb{R}, \nonumber\\
         \exists \theta &\emph{ satisfying }  \label{eqn:finetune_fit_well} 
        \begin{cases}
            g_\theta(\vx_i) = z_i & \forall  i \in T,\\
            g_\theta(\vx_i) = 0 &\forall i \in [K]\setminus T.  
    \end{cases}    
    \end{align}
\end{definition}
This definition is a generalization of conventional memorization capacity shown below, when the condition is relaxed to a special case, $T = [N]$. 

\begin{definition}[Memorization Capacity~
\citep{yun2019small}]
    The memorization capacity of a neural network $g_\theta$ is 
    \begin{align}
        N_{\op{MC}}^{\star}(g) &:= \max_{N \ge 0} N 
        \emph{ such that } \nonumber \\
        \forall \vx_i \in \mathbb{R}^d, &\forall z_i \in \mathbb{R}, 
         \exists \theta \emph{ with }
         g_\theta(\vx_i) = z_i \quad \forall i \in [N]
    \end{align}
\end{definition}

\begin{remark}\label{remark:ftc_vs_mc}
    The memorization capacity and the fine-tuning capacity has a trivial bound: for any neural network $g$ and for arbitrary $K > 0$, 
    \begin{align}
        N_{\op{FTC}}^{\star}(g,K) \le N_{\op{MC}}^{\star}(g).
    \end{align}
\end{remark}

Note that FTC is defined as the maximum number of samples $N$ we can fine-tune using a given network $g$. One can also consider an equivalent definition: the minimum number of neurons $m$ contained in $g$ to successfully fine-tune $N$ samples, which is formally stated below. 

\begin{definition}[FTC, equivalent form, in terms of \# neuron]
    The minimum number of neurons required for fine-tuning arbitrary $N$ out of $K$ samples, is defined as  
    \begin{align*}
        m_{\op{FTC}}^{\star}(N,K) &:= \min_{m \ge 0} m 
     \emph{ such that } \\
     &\forall T \subseteq [K] \emph{ with } \lvert T \rvert = N, \forall \vx_i \in \mathbb{R}^d, \forall z_i \in \mathbb{R}, \\
        & \exists \emph{ neural network } g_{\theta} \emph{ with } m \emph{ neurons satisfying }   \\
        &\begin{cases}
            g_\theta(\vx_i) = z_i &\hbox{ for all }  i \in T,\\
            g_\theta(\vx_i) = 0 &\hbox{ for all } i \in [K]\setminus T.  
    \end{cases}   
    \end{align*}
\end{definition}

Throughout the paper, we use $N^{\star}$ as a short-hand notation for $N^{\star}_{\op{FTC}}$, and use $m^{\star}$ as a short-hand notation for $m^{\star}_{\op{FTC}}$. Our theoretical results provide bounds on $m^{\star}$ and $N^{\star}$, \eg Theorem~\ref{thm:ftc_add_bound} is bounding $m^{\star}$, while an equivalent result in Corollary~\ref{coro:ftc} is bounding $N^{\star}$.

\begin{figure}[t!]
    \centering
    \includegraphics[width=6cm]{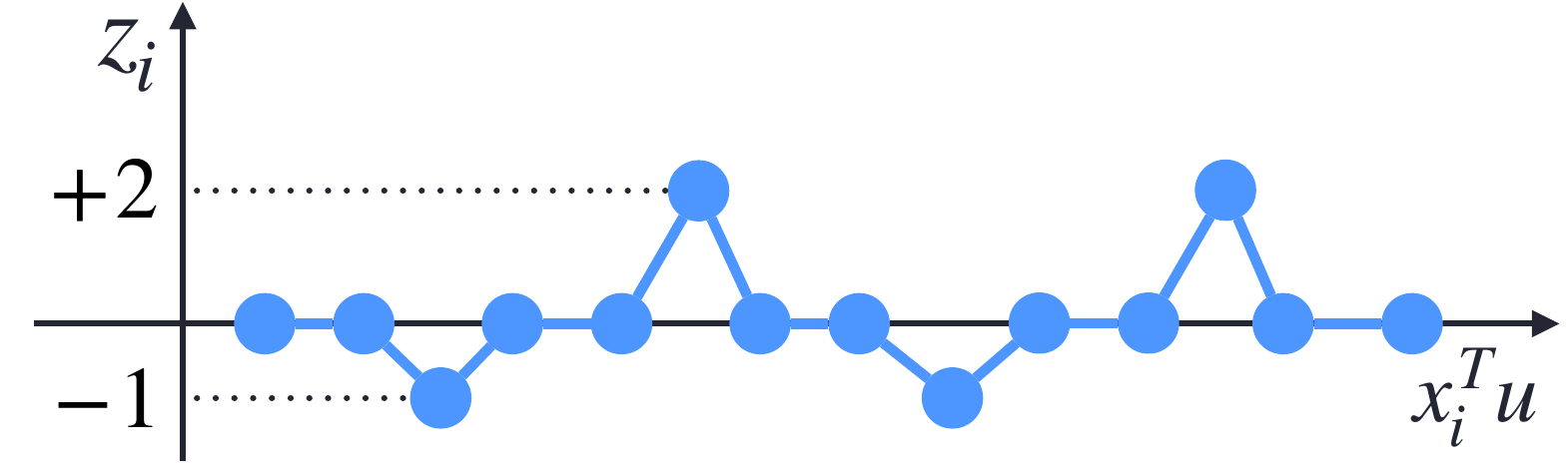}
    \caption{
    Proving Theorem~\ref{thm:ftc_add_bound} for $K=14, N=4$. }
    \label{fig:proof_num_pieces}
\end{figure}

\begin{figure}[t!]
\vspace{-3mm}
    \centering
    \includegraphics[width=6cm]{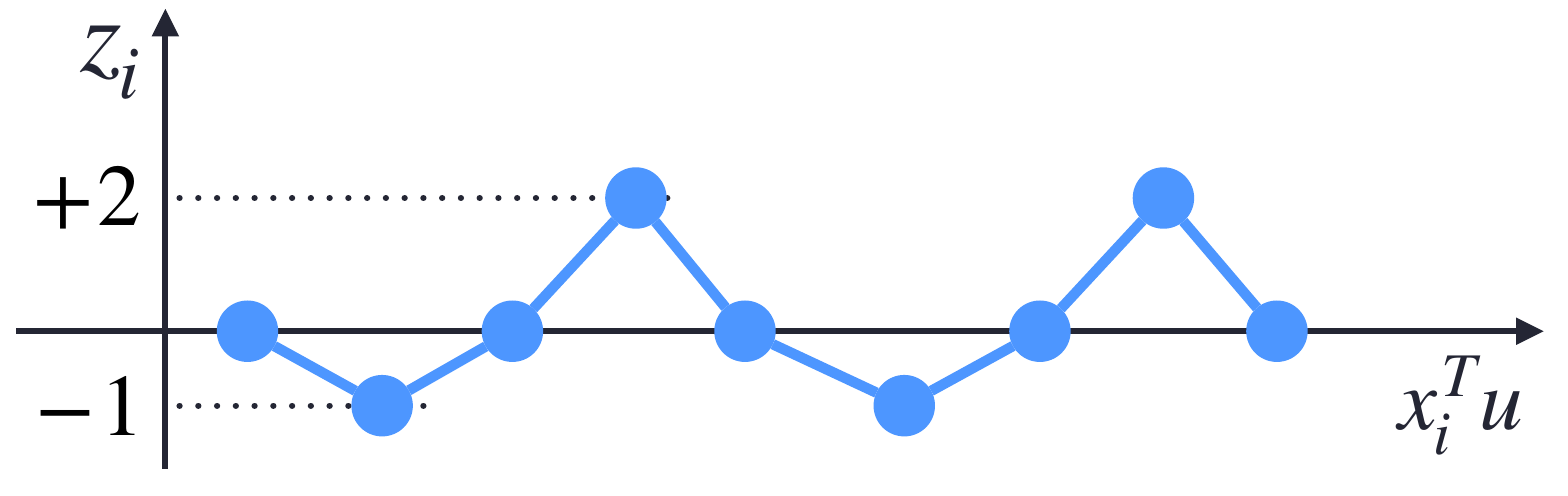}
    \caption{
    Proving Theorem~\ref{thm:ftc_add_bound} for $K=9, N=4$. }
    \label{fig:proof_num_pieces_K9}
\vspace{-3mm}
\end{figure}
\section{FTC of 2-layer FC ReLU Networks}\label{sec:two_layer}
\vspace{-3mm}
A 2-layer fully-connected neural network $g_{\theta}:\mathbb{R}^d \rightarrow \mathbb{R}$ with ReLU activation can be represented as
\begin{align}
\label{eq:2l}
    g_{\theta}(x) = \mW_2\sigma(\mW_1 \vx + \vb_1) + \vb_2,   
\end{align}
which is parameterized by $\theta = [\mW_1, \mW_2, \vb_1, \vb_2]$ where $\mW_1 \in \sR^{m \times d}$, $\mW_2 \in \sR^{1 \times m}$, $\vb_1 \in \sR^{m}$, and $\vb_2 \in \sR$. Here, $m$ is the number of hidden neurons, and $\sigma$ is the ReLU activation. 
The below result states the bounds on $m$ for 2-layer FC ReLU networks.
\begin{theorem}
\label{thm:ftc_add_bound}
Let $K \ge 3$.
\begin{enumerate}
    \item 
    For all $T \subseteq [K]$, $|T|=N$, $\vx_i \in \mathbb{R}^d$, $z_i \in \mathbb{R}$, $i \in [K]$, there exists a 2-layer fully-connected ReLU network $g$ with $m$ neurons satisfying \eqref{eqn:finetune_fit_well} 
    and 
    $$ m \leq \min \{3N+1, K-1\}.$$
    \item 
    For given $T \subseteq [K]$, $|T|=N$,  $\vx_i \in \mathbb{R}^d$, $z_i \in \mathbb{R}$, $i \in [K]$, suppose that \eqref{eqn:finetune_fit_well} holds for some 2-layer fully-connected ReLU network $g$ with $m$ neurons. Then, $$\min \{3N, K-2\}  \leq m.$$
\end{enumerate}
    Thus, 
    \begin{align}
        \min \{3N, K-2\} \le m^{\star} \le \min\{3N+1, K-1\}.
    \end{align}
\end{theorem}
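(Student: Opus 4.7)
The plan is to reduce both directions to a one-dimensional interpolation problem by restricting the ReLU network to a line and then counting the knots of the resulting piecewise linear function. Every 2-layer ReLU net with $m$ hidden units, restricted to any affine line, is a 1D PL function with at most $m$ knots, and conversely any 1D PL $h$ with $m$ knots lifts to such a network via $g(\vx)=h(\va^\top\vx)$ for a suitable direction $\va$.

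For the upper bound, I choose $\va$ so that $p_i:=\va^\top\vx_i$ are pairwise distinct, relabel so $p_1<\cdots<p_K$, and set $v_i:=z_i$ for $i\in T$ and $v_i:=0$ otherwise. For the $K-1$ branch I would use the classical lower-triangular memorization ansatz $h(t)=c+\sum_{j=1}^{K-1}w_j\sigma(t-p_j)$: the constraints $h(p_i)=v_i$ form a $K\times K$ lower-triangular system which is solved by forward substitution. For the $3N+1$ branch I would build one disjoint triangular bump per target. For each $i\in T$ I pick $a_i<p_i<c_i$ whose open interval $(a_i,c_i)$ is disjoint from $\{p_j:j\ne i\}$, and set
\[
\psi_i(t) \;=\; \alpha_i\,\sigma(t-a_i)+\beta_i\,\sigma(t-p_i)+\gamma_i\,\sigma(t-c_i),
\]
with $(\alpha_i,\beta_i,\gamma_i)$ chosen so that $\psi_i(p_i)=z_i$ and $\psi_i\equiv 0$ outside $(a_i,c_i)$. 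Summing $h=\sum_{i\in T}\psi_i$ gives a 1D PL function using $3N$ ReLUs; the one-unit slack in the theorem's ``$+1$'' permits a global bias.

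For the lower bound I adversarially place the $\vx_i$ on a single line, so that the restriction $\tilde g$ of any candidate 2-layer ReLU $g$ to this line is a 1D PL function with at most $m$ knots; the question then becomes a purely 1D knot-counting problem. For the $3N$ bound (active when $K\ge 3N+2$) adversary picks $T$ with at least two zero-valued indices between each pair of consecutive targets and at least two beyond each extremal target (e.g.\ $T=\{3,6,\dots,3N\}$). The key rigidity is that any linear segment of $\tilde g$ vanishing at two interpolation points must be the zero constant on that segment; this forces flat-zero plateaus to separate the $N$ bumps. Each bump must then contribute three private knots --- one to leave its left plateau, one to reverse slope at the peak, one to rejoin its right plateau --- and knots from different bumps lie in disjoint sub-intervals, giving $3N$ knots in total. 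For the $K-2$ bound (active when $K<3N+2$) adversary picks $T$ and values so that the average-slope sequence $\bar s_j=(v_{j+1}-v_j)/(p_{j+1}-p_j)$ oscillates aggressively across $j$; combining the rigidity lemma with the constraint that each linear piece of $\tilde g$ can accommodate at most two samples (otherwise three would have to be collinear) forces the $K-2$ bound.

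The technically most delicate step is the ``three private knots per bump'' claim driving the $3N$ lower bound. The hard part will be ruling out clever global constructions in which a single knot plays a dual role --- for instance serving simultaneously as the right endpoint of one bump and the left endpoint of the adjacent zero plateau --- and handling edge effects at the extremal targets whose zero-valued neighbours lie on only one side. The crux is the rigidity lemma above: once it is in place, each bump is sandwiched between forced flat-zero plateaus, the transitions in and out of each plateau must each be mediated by its own knot, and the separating plateaus (being strictly positive in length by the adversary's choice of $T$) prevent any knot from being shared across two bumps.
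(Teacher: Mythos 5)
Your upper bound is correct and in fact takes a genuinely different route from the paper. The paper projects onto a line, forms the consecutive partition of $[K]\setminus T$, deletes all interior points of each zero-block, and then runs a single left-to-right monotone interpolation (Lemma~\ref{lem:linear}) through the surviving $|J|\le\min\{3N+2,K\}$ points, using $|J|-1$ neurons; the deleted points are recovered because they sit on a segment whose two endpoints are both mapped to $0$. Your construction instead superposes $N$ disjoint triangular bumps, each built from $3$ ReLUs and supported on an interval containing no other projected sample, plus the global lower-triangular ansatz for the $K-1$ branch. Both are valid; yours is more local and actually yields $m\le\min\{3N,K-1\}$ in the first branch, matching the lower bound exactly when $K\ge 3N+2$, whereas the paper's block-removal argument gives $3N+1$ but degrades gracefully (Theorem~\ref{lem:ftc_upp}, Remark~\ref{rem1}) when the zeros happen to be clustered.

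The lower bound, however, has a genuine gap: your adversary specifies only the index set $T$ and never the target values $z_i$, and the ``forced flat-zero plateaus $\Rightarrow$ three private knots per bump'' step is false without a careful choice of those values. Concretely, take $K=8$, $T=\{3,6\}$ (so $K=3N+2$ and the claimed bound is $3N=6$), $p_i=i$, and \emph{same-sign} targets $z_3=z_6=1$. The piecewise linear function that is $0$ on $(-\infty,2.5]$, rises to $1$ at $t=3$, descends with slope $-1$ through $(4,0)$ down to $(4.5,-0.5)$, ascends with slope $+1$ through $(5,0)$ up to $(6,1)$, descends to $0$ at $t=6.5$, and is $0$ thereafter, interpolates all eight constraints using only \emph{five} knots, hence is realized by a $2$-layer ReLU net with $m=5<6$ neurons. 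The point is that nothing forces the two zeros at $i=4,5$ to lie on a common affine piece: a single knot placed between them lets the descent of one bump merge into the ascent of the next, and with $N$ same-sign bumps one saves $N-1$ knots this way, collapsing your bound to roughly $2N+1$. This is exactly why the paper's adversary assigns \emph{alternating-sign} values ($z_i=-1$ and $z_i=2$ on alternate elements of $T$): with opposite signs on adjacent bumps, the one-sided limits at any candidate shared knot in $(p_{3k+1},p_{3k+2})$ have opposite signs, so continuity forces either a genuine zero plateau or an extra knot, and only then does the count of $3N+1$ pieces (equivalently $3N$ knots, hence $m\ge 3N$ via the $m+1$-pieces bound for $2$-layer ReLU nets) go through. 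The same issue affects your $K-2$ branch: the observation that each affine piece holds at most two samples only yields about $K/2$ pieces, not $K-1$; again the sign pattern of the prescribed values is what upgrades this to $K-1$ pieces. To repair the proof, specify the adversarial values as in the paper and replace the plateau argument with a local analysis at each candidate shared knot.
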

The proof of this theorem is given in Sec.~\ref{sec:two_layer_lower} and Sec.~\ref{sec:two_layer_upper}.

\begin{remark}
    If $N=K$ (i.e., fine-tuning changes all labels), then the result of Theorem~\ref{thm:ftc_add_bound} reduces to 
    \begin{align*}
        m^{\star} + 1 \leq N = K \leq m^{\star}+2.
    \end{align*}
    The upper bound of $N$ coincides with the upper bound of memorization capacity studied in the result of~\citep{yun2019small}. On the other hand, the lower bound is consistent with the one given in \citep{zhang2016understanding}. 
\end{remark}

    Below we state the  bounds for the fine-tuning capacity of a 2-layer fully-connected ReLU, directly obtained from the above theorem.

\begin{corollary} [FTC of 2-layer FC ReLU]\label{coro:ftc}
    Suppose $K \ge 3$. For given $m \in \mathbb{N}$, let $N^{\star}$ be the fine-tuning capacity of a 2-layer fully-connected ReLU network $g$ given in \eqref{eq:2l} with $m$ neurons. 
    \begin{enumerate}
        \item 
        If $K \geq m+2$, then
        \begin{align}
        \label{eq:1ftc}
        \left\lfloor \frac{m-1}{3} \right\rfloor \leq  N^{\star} \leq \frac{m}{3}.
    \end{align}
        \item 
        If $K \leq m+1$, then
        $N^{\star} = K$.
    \end{enumerate}
\end{corollary}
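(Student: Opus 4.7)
The plan is to read Corollary~\ref{coro:ftc} off from Theorem~\ref{thm:ftc_add_bound} by inverting the bounds. Since $m^{\star}(N,K)$ is nondecreasing in $N$, the capacity $N^{\star}(m,K)$ is precisely the largest $N$ with $m^{\star}(N,K) \le m$, so I will use the theorem's upper bound on $m^{\star}$ to produce the lower bound on $N^{\star}$, and its lower bound on $m^{\star}$ to produce the upper bound on $N^{\star}$.

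For Case~2 ($K \le m+1$) the conclusion is immediate: applying the theorem's upper bound at $N=K$ gives $m^{\star}(K,K) \le \min\{3K+1,K-1\} = K-1 \le m$, so $m$ neurons already suffice to realize arbitrary values on all $K$ samples, and combined with the trivial $N^{\star} \le K$ this yields $N^{\star} = K$.

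For Case~1 ($K \ge m+2$), the lower bound follows by taking $N_0 = \lfloor (m-1)/3 \rfloor$: then $3N_0+1 \le m$ and $K-1 \ge m+1 > m$, so $\min\{3N_0+1,\,K-1\} = 3N_0+1 \le m$, giving $m^{\star}(N_0,K) \le m$ and hence $N^{\star} \ge N_0$. For the matching upper bound I set $N_1 = \lfloor m/3\rfloor + 1$, so $3N_1 \ge m+1$; when $K \ge m+3$ one also has $K-2 > m$, so Theorem~\ref{thm:ftc_add_bound} forces $m^{\star}(N_1,K) \ge \min\{3N_1,K-2\} \ge m+1 > m$, hence $N^{\star} < N_1$, i.e.\ $N^{\star} \le \lfloor m/3\rfloor \le m/3$.

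The only delicate step is the boundary case $K = m+2$, where $K-2 = m$ exactly and the stated sandwich in Theorem~\ref{thm:ftc_add_bound} only certifies $m^{\star}(N_1,K) \ge m$ rather than the strict inequality needed above. I would close this gap by appealing back to the hard instance constructed in the lower bound proof in Section~\ref{sec:two_layer_lower} to check that it continues to require at least $m+1$ neurons when $K=m+2$, so that the strict inequality---and hence the uniform conclusion $N^{\star} \le m/3$ for every $K \ge m+2$---holds.
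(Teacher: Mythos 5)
Your overall route is the same as the paper's: both halves of the corollary are obtained by inverting the two parts of Theorem~\ref{thm:ftc_add_bound}, and your treatment of Case~2, of the lower bound in Case~1, and of the upper bound when $K \ge m+3$ matches the paper's argument step for step.

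The one place you deviate is the boundary case $K=m+2$, which you are right to single out: there $K-2=m$, so Theorem~\ref{thm:ftc_add_bound}.2 applied to $N_1=\lfloor m/3\rfloor+1$ only yields $m^{\star}(N_1,K)\ge\min\{3N_1,m\}=m$, which is not in conflict with having $m$ neurons. (The paper's own proof simply writes that the upper bound follows from Theorem~\ref{thm:ftc_add_bound}.2 and $K\ge m+2$, and does not treat this case separately.) However, your proposed repair does not go through as stated. The hard instance built in Section~\ref{sec:two_layer_lower} for the regime $N\le K<3N+2$ forces the restriction $\bar g_{\theta}$ to have $K-1$ affine pieces, while a $2$-layer ReLU network with $m$ neurons realizes up to $m+1$ pieces; at $K=m+2$ these two numbers coincide, so the piece-counting argument certifies only $m^{\star}\ge K-2=m$ and cannot certify the strict bound $m^{\star}\ge m+1$ that your argument needs. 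Closing this case requires something finer than counting pieces --- for instance, the structural fact that the two extreme slopes of an $m$-neuron two-layer ReLU network are tied to the per-neuron slope increments (this is exactly what makes the $m=1$, $K=3$, $N=1$ instance infeasible even though it needs only two pieces) --- or a different hard instance altogether. As written, your statement that the instance ``continues to require at least $m+1$ neurons'' is an unverified assertion, and verifying it is the entire content of the missing case.
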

\begin{proof}
    Suppose that $K \geq m+2$. By Theorem~\ref{thm:ftc_add_bound}.1, for $|T|= \left\lfloor \frac{m-1}{3} \right\rfloor$, there exists a 2-layer fully-connected ReLU network $g$ where the number of neurons of $g$ is less than or equal to
    \begin{align*} 
     \min \left\{3\left\lfloor \frac{m-1}{3} \right\rfloor+1, K-1\right\}= 3\left\lfloor \frac{m-1}{3} \right\rfloor+1 \leq m.
    \end{align*}
    Here, $K \geq m+2$ yields the first equality while the second inequality follows from the definition of the floor function. This yields the lower bound of \eqref{eq:1ftc}. On the other hand, due to Theorem~\ref{thm:ftc_add_bound}.2 and $K \geq m+2$, we conclude the upper bound of \eqref{eq:1ftc}.

    Suppose that $K \leq m+1$. Similarly, by Theorem~\ref{thm:ftc_add_bound}.1 for $|T| = K$, there exists a 2-layer fully-connected ReLU network $g$ where (the number of neurons of $g$) $\leq K-1 \leq m$. This yields $N \geq K$. As $N \leq K$ always holds due to our construction, we conclude $N=K$.
\end{proof}

In other words, for sufficiently large $K$, the fine-tuning capacity $N$ does not depend on the size $K$ of the underlying dataset $D$ but the number of neurons $m$.

\vspace{-3mm}
\subsection{Proof of Lower Bound on $m^{\star}$}\label{sec:two_layer_lower}
\vspace{-3mm}
We first prove the lower bound in Theorem~\ref{thm:ftc_add_bound} when $K \ge 3N+2$. 
Define $T = \{3, 6, 9, ..., 3N\}$, and define $\vx_i = i\vu$ for all $i \in [N]$ for arbitrary vector $\vu \in \sR^d$. Let $z_i = 2$ if $i = 6k$ for some $k \in \sN$ and $z_i = -1$ if $i = 6k-3$ for some $k \in \sN$. See Fig.~\ref{fig:proof_num_pieces} when $K=14$ and $N=4$.  
Then, $\bar{g}_{\theta}(t) := g_{\theta}(t\vu)$ is a piecewise affine function with at least $3N+1$ pieces. Recall that using Theorem 3.3 of~\citep{yun2019small} for 2-layer ReLU network, $\bar{g}_{\theta}(t)$ is having $m+1$ pieces, since ReLU activation is a piecewise linear function with two pieces. Thus, $m \ge 3N$ holds. 
Note that given $K$ 
and $N$,
the above proof scheme specifies $T, \{\vx_i, z_i\}_{i=1}^K$ and counts the number of pieces for the piecewise-linear function $\bar{g}_{\theta}(t) = g_{\theta}(t\vu)$, under the setting of  $\vx_i = i\vu$. 

We use a similar technique for proving the lower bound on $m$ when $N \le K < 3N+2$. Consider revising $(T, x_i, z_i)$ triplet (defined for $K \ge 3N+2$ case),
so that $K < 3N+2$ condition is satisfied. For example, when $K = 2N+1$, the triplet is revised as $T = \{2, 4, \cdots, 2N\}$, $z_i  = 2$ if $i=4k$ for some $k \in \mathbb{N}$ and $z_i  = -1$ if $i=4k-2$ for some $k \in \mathbb{N}$, as illustrated in Fig.~\ref{fig:proof_num_pieces_K9}, which has $K-1=8$ pieces.
For general $K$ and $N$ satisfying $K < 3N+2$, one can choose $(T, x_i, z_i)$ triplet such that the corresponding 
$\bar{g}_{\theta}(t) = g_{\theta}(t\vu)$ has $K-1$ pieces. 
Thus, the number of pieces $p(K)$ of $\bar{g}_{\theta}(t)$ we constructed can be represented as
\begin{align}\label{eqn:p_K}
    p(K) = \begin{cases}
    3N+1, & \text{ if } K \ge 3N+2 \\
    K-1, & \text{ if } N \le K < 3N+2     
    \end{cases}
\end{align}
This completes the proof.

\vspace{-3mm}
\subsection{Proof of Upper Bound on $m^{\star}$}\label{sec:two_layer_upper}
\vspace{-3mm}

We here establish the upper bound in Theorem~\ref{thm:ftc_add_bound}. To be specific,  in Theorem~\ref{lem:ftc_upp}, we establish the upper bound on $m^{\star}$ in terms of the partition of $[K]\setminus T$. The key idea is to remove all points of  $[K]\setminus T$ except for the endpoints of each block as in Figure~\ref{fig:proof_remove}. More discussion will be provided below.

Let us introduce some terminology. For a given set $I$, a partition $\mathcal{P}$ of $I$ is a set of nonempty subsets $P$ of $I$ such that every element in $I$ is in exactly one of these subsets. We denote $P \in \mathcal{P}$ by the \textit{block} of $\mathcal{P}$.

\begin{definition}
    For $I \subset [K]$, we say that $\mathcal{P}(I)$ is the consecutive partition of $I$ if all consecutive integers in $I$ are included in the same block, i.e., for $i,j \in I$, $i$ and $j$ are in the same block of $\mathcal{P}(I)$ if and only if $|i-j|=1$.
\end{definition}

\begin{example}
\label{ex:l2}
    If $I = \{1, 2, 3, 5, 6, 8, 10, 11, 12, 13\},$ then the \textit{consecutive partition} is defined as
\begin{align}\label{eqn:consecutive-partition}
    \mathcal{P}(I) = \{ \{1, 2, 3\}, \{5, 6\}, \{8\}, \{10, 11, 12, 13\}\},
\end{align}
and the blocks of $\mathcal{P}(I)$ are $\{1, 2, 3\}, \{5, 6\}$, $\{8\}$, and $\{10, 11, 12, 13\}$.
\end{example}

The following theorem shows that the upper bound of $m$ is given in terms of the size of the blocks in $\mathcal{P}([K]\setminus T)$. For proving Theorem~\ref{thm:ftc_add_bound}, we find the uniform bound for general datasets using this bound and Lemma~\ref{lem:num2}.

\begin{figure}[t!]
    \centering
    \vspace{-3mm}
    \includegraphics[width=6cm]{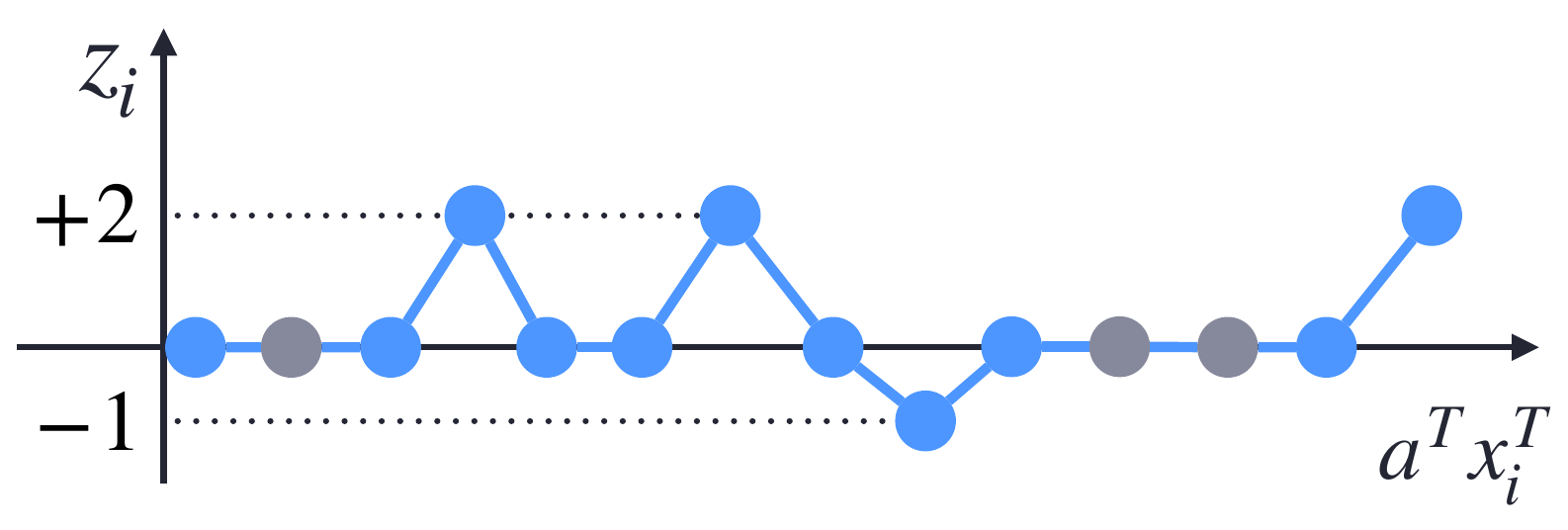}
    \vspace{-2mm}
    \caption{
    Illustration of the neural network constructed in Theorem~\ref{lem:ftc_upp} with $K=14$ and $T = \{4, 7, 9, 14\}$. $\mathcal{P}([K]\setminus T)$ is the partition $\mathcal{P}(I)$ given in Example~\ref{ex:l2}. The gray points are the removed ones. 
    }
    \label{fig:proof_remove}
\end{figure}

\begin{theorem}
\label{lem:ftc_upp}
Consider the same setting as in Theorem~\ref{thm:ftc_add_bound}, and suppose that 
\begin{align}
\label{eq:up0}
    \va^T\vx_1  < \va^T\vx_2 < \cdots < \va^T\vx_K    
\end{align}
holds for some $\va \in \sR^d$. Then, there exists an $m$-neuron network 
     $g_{\theta}(x) = \mW_2\sigma(\mathbf{1}_{m} \va^T \vx + \vb_1) + \vb_2,$
    such that \eqref{eqn:finetune_fit_well} holds, and 
    \begin{align}
    \label{eq:up1}
        m^{\star} \leq K-1 - \sum_{P \in \mathcal{P}([K]\setminus T)} \max\{ |P|-2,0 \}
    \end{align}
    Here, $\mW_2 \in \sR^{1 \times m}$, $\vb_1 \in \sR^{m}$, and $\vb_2 \in \sR$.
     
\end{theorem}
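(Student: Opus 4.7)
The plan is to project everything onto the line spanned by $\va$ so that the construction reduces to designing a suitable piecewise-affine function $\bar{g}:\mathbb{R}\to\mathbb{R}$, which is then easy to realize as a 2-layer ReLU network of the prescribed form.

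Write $t_i := \va^T \vx_i$ (so $t_1 < \cdots < t_K$ by hypothesis) and $y_i := z_i$ for $i \in T$, $y_i := 0$ otherwise. Any network of the form $g_\theta(\vx) = \mW_2 \sigma(\mathbf{1}_m \va^T \vx + \vb_1) + \vb_2$ depends on $\vx$ only through $\va^T \vx$, and as a univariate function $\bar{g}$ it is piecewise affine with at most $m$ kinks; moreover its leftmost piece (to the left of all kinks) is constant, since every ReLU is then inactive. Conversely, any such $\bar{g}$ lifts to a $g_\theta$ of the required shape. So the reduction is to build a $\bar{g}$ with $\bar{g}(t_i) = y_i$ for all $i$, then count kinks.

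First I would take the naive piecewise-affine interpolant through $(t_1, y_1), \ldots, (t_K, y_K)$, placing kinks at each of $t_1, \ldots, t_{K-1}$; the kink at $t_1$ is what allows the leftmost piece to be the constant $y_1$ required by the ReLU form. This base interpolant uses $K-1$ kinks. Next I would consolidate over blocks of zeros: for every block $P = \{p, p+1, \ldots, q\} \in \mathcal{P}([K]\setminus T)$ with $|P| \geq 2$, the values $y_p, y_{p+1}, \ldots, y_q$ all vanish, so the naive interpolant is identically zero on $[t_p, t_q]$. Replace this segment by the zero function on $[t_p, t_q]$: the kinks at the block endpoints $t_p$ and $t_q$ must be retained (the neighboring pieces can have nonzero slope), but the interior kinks $t_{p+1}, \ldots, t_{q-1}$ disappear, saving exactly $|P|-2$ kinks. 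Summing over all blocks yields a modified $\bar{g}$ that still interpolates every $(t_i,y_i)$ and uses $(K-1) - \sum_{P \in \mathcal{P}([K]\setminus T)} \max\{|P|-2, 0\}$ kinks.

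The final step is the ReLU realization: any piecewise-affine function on $\mathbb{R}$ with constant leftmost value $c$, kinks $x_1^\star < \cdots < x_m^\star$, and slope jumps $\Delta_1, \ldots, \Delta_m$ admits the canonical form $\bar{g}(t) = c + \sum_{j=1}^m \Delta_j\,\sigma(t - x_j^\star)$, so choosing $\vb_1 = -(x_1^\star, \ldots, x_m^\star)^T$, $\mW_2 = (\Delta_1, \ldots, \Delta_m)$, and $\vb_2 = c$ gives a network of the required form with exactly the counted number of neurons. The main subtlety — and really the only point that needs care — is honoring the ``constant leftmost piece'' constraint built into the prescribed form of $g_\theta$, which is what forces the base count to be $K-1$ rather than $K-2$; on blocks bordering the indices $1$ or $K$ one could in fact save one additional kink, but since the theorem is stated as an inequality this extra slack requires no bookkeeping.
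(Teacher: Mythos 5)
Your proof is correct and follows essentially the same route as the paper: reduce to a univariate piecewise-linear interpolation along the direction $\va$, observe that a block of consecutive zero labels needs no interior breakpoints, and realize the interpolant with one ReLU per kink via the canonical form $c+\sum_j \Delta_j\,\sigma(t-x_j^\star)$. The paper phrases the saving as deleting the interior \emph{points} of each block before interpolating (its set $J$ and Lemma~\ref{lem:linear}) rather than deleting redundant \emph{kinks} afterward, but the resulting function and neuron count are identical.
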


\begin{remark}
\label{rem1}
    To minimize the width $m$ of the network, a smaller number of blocks of bigger sizes would be ideal. If we only have one block in the partition $\mathcal{P}([K]\setminus T)$, then Eq.~\ref{eq:up1} is given as
    \begin{align*}
        m^{\star} \leq K-1 - | [K]\setminus T | +2 = N + 1. 
    \end{align*}
    This happens when $\va^T \vx_i$s for $i \in [K]\setminus T$ are segregated for some $\va$, \eg when $[K]\setminus T = \{i, i+1, \cdots, j\}$ for some $i < j$ in $[K]$.
    Thus, some appropriate projection yields a smaller number of neurons required.
\end{remark}

\begin{remark}
    The worst scenario is when every block of $\mathcal{P}([K]\setminus T)$ has $2$ or less elements, as in Figures~\ref{fig:proof_num_pieces} and \ref{fig:proof_num_pieces_K9}. Note that if $K - N$ is much larger than $N$, then this scenario cannot occur. This is because the number of blocks cannot be larger than $\min\{K-N, N\}+1$ from Lemma~\ref{lem:num}.
\end{remark}

\begin{proof}[Proof of Theorem~\ref{lem:ftc_upp}]
    First, we consider the case when 
    \begin{align}
    \label{eq:ftc_upp1}
        |P| \leq 2 \hbox{ for all } P \in \mathcal{P}( [K]\setminus T).    
    \end{align}
     In this case, it suffices to prove $m \leq K.$
    As $K$ neurons can represent $K$ data points, the inequality directly follows from the standard argument as in \cite{zhang2016understanding} and also shown in Lemma~\ref{lem:linear}.

    Let us consider general cases. The main strategy is to remove data points so that \eqref{eq:ftc_upp1} holds. Specifically, except for two endpoints of each block $P \in \mathcal{P}([K]\setminus T)$, we remove $i \in [K]\setminus T$ from $[K].$ Let us denote this new subset of $[K]$ by 
    \begin{align}\label{eqn:J-index-set}
     J = \{j_1 < j_2 < \cdots < j_{|J|}\}   
    \end{align}
    For example, when $K=14$ and $T = \{4,7,9,14\}$ as in Fig.~\ref{fig:proof_remove}, the consecutive partition $\mathcal{P}(I)$ is given in Eq.~\ref{eqn:consecutive-partition}, and thus we remove $\{2,11,12\}$ from $[K] = \{1,2, \cdots, 14\}$, which gives us $J = \{1,3,4,5,6,7,8,9,10,13,14\}$.  
    Note that the number of data points in $J$ is 
    \begin{align}
    \label{eq:j}
      |J|:= K - \sum_{P \in \mathcal{P}( [K]\setminus T)} \max\{ |P|-2,0 \}.  
    \end{align}
    Applying Lemma~\ref{lem:linear} with $A:= \{(\va^T\vx_j, \vz_j)\}_{j \in J}$ and $m = |J|-1$, there exist $\mW_2 \in \sR^{1 \times m}$, $\vb_1 \in \sR^{m}$, and $\vb_2 \in \sR$ such that 
    \begin{align*}
        h_\theta(\va^T\vx_j) = \mW_2\sigma(\mathbf{1}_{m} \va^T \vx_j + \vb_1) + \vb_2 = z_j 
    \end{align*}
    for all $j \in J$. Thanks to $h_\theta(\va^T\vx) = g_\theta(\vx)$, we conclude $g_\theta(\vx_j) = z_j$ for all $j\in J$.
    
    Lastly, for $i \in [K]\setminus J$, there exist two endpoints $j_{i} < j_{i+1}$ such that $[j_{i}, j_{i+1}] \cap ([K]\setminus T)$ in $\mathcal{P}([K]\setminus T)$, and $j_i < i < j_{i+1}$. Note that $h_\theta$ constructed in Lemma~\ref{lem:linear} is linear in $[\va^T \vx_{j_{i}}, \va^T \vx_{j_{i+1}}]$. Since $g_\theta(\vx_{j_{i}}) = g_\theta(\vx_{j_{i+1}}) = 0$, we conclude that $g_\theta(\vx_i) = 0$ as desired. 
\end{proof}

\begin{lemma} 
\label{lem:linear}
    For $m\geq1$ and $A = \{(w_i, z_i)\}_{i=1}^{m+1}$  where $w_1 < w_2 < \cdots < w_{m+1}$, $w_i \in \sR$ and $y_i \in \sR$. There exist $\mW_2 \in \sR^{1 \times m}$, $\vb_1 \in \sR^{m}$, and $\vb_2 \in \sR$ such that 
    \begin{align}
    \label{eq:linear}
        h_\theta(x) = \frac{z_{i}-z_{i+1}}{w_i-w_{i+1}}(x - w_i) + z_i
    \end{align}
    for $x \in [w_i, w_{i+1}]$, $i=1,2,\cdots, m-1$
    and \eqref{eq:linear} with $i=m$ holds in $[w_{m}, \infty)$
    where $h_\theta(x) = \mW_2\sigma(\mathbf{1}_{m} x + \vb_1) + \vb_2.$
\end{lemma}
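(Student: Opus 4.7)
The plan is to construct $h_\theta$ explicitly as a continuous piecewise linear function with breakpoints exactly at $w_1, \ldots, w_m$, matching the prescribed slopes on every interval. Setting the $j$th first-layer bias to $(\vb_1)_j = -w_j$ makes the $j$th hidden neuron output $\sigma(x - w_j)$, which vanishes for $x \leq w_j$ and equals $x - w_j$ for $x \geq w_j$. Any continuous piecewise linear function whose kinks lie among the $w_j$ can then be realized as a linear combination of these shifted ReLUs plus a constant offset absorbed into $\vb_2$.

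Concretely, let $s_i := (z_{i+1} - z_i)/(w_{i+1} - w_i)$ for $i = 1, \ldots, m$ denote the desired slope on $[w_i, w_{i+1}]$, and put $s_0 := 0$. I would choose the second-layer weights $(\mW_2)_{1,j} := s_j - s_{j-1}$ and the output bias $\vb_2 := z_1$. On the interval $[w_i, w_{i+1}]$ (with $w_{m+1} := \infty$ for $i = m$), the hidden units $j \leq i$ are active and each contributes slope $(\mW_2)_{1,j}$, while units $j > i$ contribute zero; hence $h_\theta$ is affine there with slope $\sum_{j=1}^{i} (s_j - s_{j-1}) = s_i$, which is exactly the slope appearing in \eqref{eq:linear}.

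It then remains to verify the correct intercept, i.e.\ that $h_\theta(w_i) = z_i$ for each $i$. This follows by a one-line induction: $h_\theta(w_1) = \vb_2 = z_1$, since every ReLU term vanishes at $w_1$; and if $h_\theta(w_i) = z_i$, then the slope $s_i$ on $[w_i, w_{i+1}]$ gives $h_\theta(w_{i+1}) = z_i + s_i(w_{i+1} - w_i) = z_{i+1}$. The condition on the final ray $[w_m, \infty)$ requires no separate argument, because no further breakpoints occur beyond $w_m$ so $h_\theta$ remains affine with slope $s_m$ throughout. I do not foresee a genuine obstacle here: the statement is a classical explicit realization of an arbitrary continuous piecewise linear interpolant by a shallow ReLU network, and the only mild care needed is the bookkeeping of the telescoping slopes $s_j - s_{j-1}$ to ensure the slopes add up correctly interval by interval.
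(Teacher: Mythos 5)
Your proof is correct and follows essentially the same construction as the paper: one ReLU neuron $\sigma(x - w_j)$ anchored at each breakpoint via $(\vb_1)_j = -w_j$, with the output bias carrying $z_1$; the paper merely phrases this as an induction on $m$ that appends one neuron at a time. Your explicit telescoping weights $(\mW_2)_{1,j} = s_j - s_{j-1}$ are in fact the careful version of the bookkeeping the paper's inductive step needs (the increment written there, $\tfrac{z_{k+1}-z_{k+2}}{w_{k+1}-w_{k+2}}$, should be the \emph{difference} of consecutive slopes, exactly as in your construction).
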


\begin{proof}
    We prove this by induction. For $m=1$, choose $\vb_1 = -w_1$ and $b_1  = z_1$, then $h_\theta(w_1) = z_1$. Take $W_2 = \frac{z_{1}-z_{2}}{w_1-w_{2}}$ and we get \eqref{eq:linear} with $i=1$ in $[w_1, \infty)$.

    Suppose that the above result holds for $m = k$. Then, there exist $\mW_2 \in \sR^{1 \times k}$, $\vb_1 \in \sR^{k}$, and $\vb_2 \in \sR$ satisfying \eqref{eq:linear} in $[w_i, w_{i+1}]$ for $i = 1,2, \cdots k-1$ and $[w_k, \infty)$. Using this, we choose $\widetilde{\mW_2} = (\mW_2, \frac{z_{k+1}-z_{k+2}}{w_{k+1}-w_{k+2}})$ and $\widetilde{b_1} = (b_1, -w_{k+1})$. Then, $\widetilde{h_\theta}(x) = \mW_2\sigma(\mathbf{1}_{m} x + \vb_1) + \vb_2$ satisfies  \eqref{eq:linear} with $i=k+1$ in $[w_{k+1}, \infty)$.
\end{proof}

\begin{lemma}
\label{lem:num}
    For $I \subset [K]$ and the consecutive partition, $\mathcal{P}(I)$, it holds that
    \begin{align*}
        |\mathcal{P}(I)| \leq \min \{ |I|, |[K]\setminus I| + 1\}.
    \end{align*}
\end{lemma}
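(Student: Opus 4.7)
\textbf{Proof plan for Lemma~\ref{lem:num}.} The plan is to establish the two bounds $|\mathcal{P}(I)| \le |I|$ and $|\mathcal{P}(I)| \le |[K]\setminus I|+1$ separately; both will follow from the structural definition of the consecutive partition without any computation.

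First I would dispose of the bound $|\mathcal{P}(I)| \le |I|$. By definition every block $P \in \mathcal{P}(I)$ is a nonempty subset of $I$, and the blocks are pairwise disjoint with union $I$. Hence $|I| = \sum_{P \in \mathcal{P}(I)} |P| \ge |\mathcal{P}(I)|$, since each $|P| \ge 1$. This step is essentially a one-liner.

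The core of the proof is the second bound $|\mathcal{P}(I)| \le |[K]\setminus I| + 1$, and here the idea is to exhibit an injection from "gaps between consecutive blocks" into $[K]\setminus I$. Enumerate the blocks of $\mathcal{P}(I)$ as $P_1, P_2, \ldots, P_r$ ordered so that every element of $P_s$ is strictly less than every element of $P_{s+1}$; such an ordering exists because two distinct blocks cannot interleave (if $i<j<k$ with $i,k$ in the same block, the consecutive-chain structure forces $j$ into that block as well when $j \in I$, and if $j \notin I$ then $i$ and $k$ cannot be linked by a chain of consecutive integers in $I$). For each $s \in \{1,\ldots,r-1\}$, let $a_s = \max P_s$ and $b_s = \min P_{s+1}$. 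Since $P_s$ and $P_{s+1}$ are distinct blocks of the consecutive partition, $b_s - a_s \ge 2$; otherwise $a_s$ and $b_s$ would differ by exactly $1$ and belong to the same block. Therefore the integer $a_s + 1$ lies in $[K]$, satisfies $a_s < a_s+1 < b_s$, and must lie in $[K] \setminus I$ (if it were in $I$ it would merge $P_s$ and $P_{s+1}$ into a single block). The map $s \mapsto a_s + 1$ is clearly injective because the values $a_s + 1$ are strictly increasing in $s$. This injection gives $r - 1 \le |[K]\setminus I|$, i.e., $|\mathcal{P}(I)| \le |[K]\setminus I| + 1$, finishing the proof.

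The only part requiring care is justifying the "ordering" step and verifying that $a_s + 1 \notin I$; both amount to unwinding the definition of the consecutive partition, so I do not anticipate any real obstacle. No auxiliary lemmas or earlier results from the paper are needed.
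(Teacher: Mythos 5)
Your proof is correct and rests on the same underlying fact as the paper's: distinct blocks of $\mathcal{P}(I)$ must be separated by at least one element of $[K]\setminus I$. The paper formalizes this by writing the blocks of $\mathcal{P}(I)$ as the complementary intervals of the blocks of $\mathcal{P}([K]\setminus I)$, whereas you give a direct injection $s \mapsto \max P_s + 1$ from the $r-1$ gaps into $[K]\setminus I$; your version is, if anything, the cleaner write-up of the same idea.
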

\begin{proof}
    First, $|\mathcal{P}(I)| \leq |I|$ directly follows from the definition of the partition. 
    
    On the other hand, let $a_i$ and $b_i$ be the end point of each block in $\mathcal{P}([K]\setminus I)$:
    \begin{align*}
        \mathcal{P}([K]\setminus I) &= \{[a_i, b_i] \cap [K]: 1\leq i \leq |[K]\setminus I|\}.
    \end{align*}
    Then, 
    \begin{align*}
        \mathcal{P}(I) &= \{[b_i+1, a_{i+1}-1] \cap [K] :1\leq i \leq |[K]\setminus I|-1\}\\& \cup \{ [1, b_i-1] \cap [K], [a_{[K]\setminus I}+1, K] \cap [K] \}
    \end{align*}
    and thus we conclude that
    $|\mathcal{P}(I)| \leq |[K]\setminus I| + 1.$
\end{proof}

\begin{lemma}
\label{lem:num2}
    Under the same setting as in Theorem~\ref{lem:ftc_upp}, we have
    \begin{align*}
        \sum_{P \in \mathcal{P}([K]\setminus T)} \max\{ |P|-2,0 \} \geq \max\{K-3N-2, 0 \}.
    \end{align*}
    In particular, $J$ given in \eqref{eq:j} satisfies
    \begin{align*}
        |J| \leq \min\{3N+2, K \}.
    \end{align*}
\end{lemma}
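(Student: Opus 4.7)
The plan is to reduce the inequality to a counting argument on the number of blocks in $\mathcal{P}([K]\setminus T)$, and then invoke Lemma~\ref{lem:num} to bound that number by $N+1$. Let me denote the blocks by $P_1, \ldots, P_b$ where $b = |\mathcal{P}([K]\setminus T)|$, so that $\sum_{i=1}^b |P_i| = |[K]\setminus T| = K - N$.

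The key observation is that $\max\{x-2, 0\} \geq x-2$ for every real $x$, so
\begin{align*}
    \sum_{P \in \mathcal{P}([K]\setminus T)} \max\{|P|-2, 0\} \;\geq\; \sum_{i=1}^b (|P_i| - 2) \;=\; (K-N) - 2b.
\end{align*}
Now I would apply Lemma~\ref{lem:num} with $I = [K]\setminus T$, which gives $b \leq \min\{K-N, N+1\}$. Picking the bound $b \leq N+1$ yields
\begin{align*}
    \sum_{P \in \mathcal{P}([K]\setminus T)} \max\{|P|-2, 0\} \;\geq\; (K-N) - 2(N+1) \;=\; K - 3N - 2.
\end{align*}
Since the left-hand side is a sum of nonnegative terms, it is also $\geq 0$, hence $\geq \max\{K-3N-2,\, 0\}$, which is the first inequality.

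For the ``in particular'' part, I would simply substitute the just-proved bound into the definition of $|J|$ from \eqref{eq:j}: writing $S := \sum_{P \in \mathcal{P}([K]\setminus T)} \max\{|P|-2, 0\}$, we have $|J| = K - S \leq K - \max\{K-3N-2,0\}$. Splitting into the two cases $K \geq 3N+2$ and $K < 3N+2$ respectively gives $|J| \leq 3N+2$ and $|J| \leq K$, so $|J| \leq \min\{3N+2, K\}$.

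The proof is essentially immediate once one picks the right side of Lemma~\ref{lem:num}; the only place that takes any thought is recognizing that the correct bound on the number of blocks is $N+1$ (rather than $K-N$), which is what makes the right-hand side come out to $K-3N-2$ as required. There is no real obstacle beyond this bookkeeping.
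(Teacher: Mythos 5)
Your proof is correct and follows essentially the same route as the paper: both drop the $\max$ via $\max\{|P|-2,0\}\geq |P|-2$, use $\sum_P |P| = K-N$, and invoke Lemma~\ref{lem:num} with the $N+1$ bound on the number of blocks. Your write-up is in fact slightly more complete, since the paper leaves the final case split for the bound on $|J|$ implicit.
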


\begin{proof}
    Recall that $\sum_{P \in \mathcal{P}([K]\setminus T)} |P| = |[K]\setminus T| =  K-N.$
    Using this, we have
    \begin{align*}
        \sum_{P \in \mathcal{P}([K]\setminus T)} \max\{ |P|-2,0 \} &\geq \sum_{P \in \mathcal{P}([K]\setminus T)} (|P| - 2 )\\
        &\geq K-N - 2 |\mathcal{P}( [K]\setminus T)|.
    \end{align*}
    By Lemma~\ref{lem:num}, the number of blocks in the partition cannot be larger than $\min{\{K-N, N+1\}}$.
\end{proof}

\textbf{Proof of upper bound in Theorem~\ref{thm:ftc_add_bound}:}
    Since $\vx_i \ne \vx_j$ for all $i \ne j$, there exists $\va \in \sR^d$ satisfying $\vx_i^T \va \ne \vx_j^T \va$ all $i \ne j$. Without the loss of generality, we assume that \eqref{eq:up0} holds. Then, using Theorem~\ref{lem:ftc_upp} and choosing $\mW_1 = \mathbf{1}_{m} \va^T$, there exists
    \begin{align*}
        g_{\theta}(x) = \mW_2\sigma(\mW_1 \vx + \vb_1) + \vb_2, 
    \end{align*}
    satisfying \eqref{eqn:finetune_fit_well} and 
    \begin{align*}
         m \leq K - 1 - \sum_{P \in \mathcal{P}([K]\setminus T)} \max\{ |P|-2,0 \}.
    \end{align*}
    By Lemma~\ref{lem:num2}, we conclude that
        $m \leq K-1 - \max\{K-3N-2, 0 \} = \min \{3N+1, K-1\}.$

\vspace{-3mm}
\section{FTC of 3-layer ReLU Network}
\label{sec:ftc3}
\vspace{-3mm}

Now we analyze FTC of 3-layer fully-connected neural network $g_{\theta}:\mathbb{R}^d \rightarrow \mathbb{R}$ with ReLU activation. Note that 3-layer network can be represented as 
\begin{align}
\label{eq:3l}
    g_{\theta}(x) = \mW_3 \sigma(\mW_{2} \sigma(\mW_{1} \vx + \vb_{1}) + \vb_{2}) + b_3.
\end{align} 
As before, $\sigma$ is the ReLU activation and the network is parameterized by $\theta = [ \mW_{1}, \mW_{2}, \mW_3, \vb_{1}, \vb_{2}, b_3]$ where $\vv \in \sR^{d_{2}}$, $\mW_{1} \in \sR^{d_{1} \times d}$, $\mW_{2} \in \sR^{d_{2} \times d_{1}}$, $\mW_{3} \in \sR^{1\times d_{2}}$, $\vb_{1} \in \sR^{d_{1}}$, $\vb_{2} \in \sR^{d_{2}}$ and $b_3 \in \sR$. 
Following the setting considered in the memorization capacity of 3-layer neural network~\citep{yun2019small}, we consider the scenario when $z_i \in [-1, +1]$ for all $i \in [K]$.
For notational simplicity, we denote $z_i = 0$ for $i \in [K]\setminus T$.
Below theorem summarizes the upper/lower bound on FTC of 3-layer network.

\begin{theorem}
[FTC of 3-layer FC ReLU]\label{thm:ftc_add_bound_3layer}
Let $K \ge 3$, $T \subseteq [K]$, $|T| = N$, and $g$ be a 3-layer FC ReLU network with $m$ neurons.
\begin{enumerate}
    \item 
    For all $\vx_i \in \mathbb{R}^d$, $z_i \in \mathbb{R}$, $i \in [K]$, there exists $g$ with $m$ neurons satisfying \eqref{eqn:finetune_fit_well} 
    where $$ m \leq \min\{ 2\sqrt{K} + \min\{ 2\sqrt{K}, 3N \}, 6 \sqrt{3N+2} \}.$$
    \item 
    For given $\vx_i \in \mathbb{R}^d$, $z_i \in \mathbb{R}$, $i \in [K]$, suppose that \eqref{eqn:finetune_fit_well} holds for some $g$ with $m$ neurons. Then, $$\sqrt{2\min \{3N, K-2\}+\frac{1}{4}} - \frac{1}{2}  \leq m.$$
\end{enumerate}
Thus, the minimum number of neurons $m^{\star}$ is bounded as
\begin{align*}
    &\sqrt{2\min \{3N, K-2\}+\frac{1}{4}} - \frac{1}{2} \\
    & \quad \quad \le m^{\star} \le \min\{ 2\sqrt{K} + \min\{ 2\sqrt{K}, 3N \}, 6 \sqrt{3N+2} \}
\end{align*}
\end{theorem}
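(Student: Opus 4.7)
The proof mirrors Theorem~\ref{thm:ftc_add_bound}: a lower bound via piece-counting on the same adversarial 1D slice, and an upper bound via explicit constructions that exploit the extra expressive power of the second hidden layer. What is new is that each second-layer neuron can now introduce many new breakpoints (one per first-layer piece), which drops the required width from $\Theta(N)$ to $\Theta(\sqrt{N})$ and roughly squares the piece-to-width ratio.

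\textbf{Lower bound.} I would reuse the adversarial data of Section~\ref{sec:two_layer_lower}, with $\vx_i = i\vu$ and alternating $z_i$ forcing $\bar g_\theta(t) := g_\theta(t\vu)$ to be a piecewise-linear function with at least $p(K) = \min\{3N+1,K-1\}$ pieces, hence $M := p(K)-1 = \min\{3N,K-2\}$ breakpoints. The new ingredient is a piece-counting lemma for 3-layer ReLU networks restricted to a line: writing the two hidden widths as $d_1,d_2$ with $d_1 + d_2 = m$, the first layer contributes at most $d_1$ breakpoints, splitting the line into $d_1+1$ intervals on each of which every second-layer pre-activation is affine; hence each of the $d_2$ second-layer neurons can introduce at most $d_1+1$ additional zero-crossing breakpoints, giving a total of at most $d_1 d_2 + d_1 + d_2 \leq m(m+1)/2$ breakpoints. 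Imposing $m(m+1)/2 \geq M$ and solving the quadratic $m^2+m-2M\geq 0$ yields $m \geq \sqrt{2M+\tfrac{1}{4}} - \tfrac{1}{2}$.

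\textbf{Upper bound.} I would give two constructions, one for each term inside the outer minimum. For $2\sqrt{K} + \min\{2\sqrt{K}, 3N\}$, pick $\va \in \sR^d$ so that the $\va^T \vx_i$ are distinct and adapt the 3-layer memorization construction of~\citet{yun2019small}: use $2\sqrt{K}$ first-layer neurons to cluster the $K$ projected coordinates into $\sqrt{K}$ groups of $\sqrt{K}$ points each, yielding a base output that is identically zero, then either complete the intra-cluster memorization in the large-$N$ regime (using $2\sqrt{K}$ second-layer neurons when $3N \geq 2\sqrt{K}$) or, in the small-$N$ regime, add $3N$ second-layer neurons that place a localized triangular bump around each index in $T$ to hit the prescribed $z_i$ without disturbing the zeros elsewhere. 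For $6\sqrt{3N+2}$, first apply the reduction of Theorem~\ref{lem:ftc_upp} to strip the interior of every block of $\mathcal{P}([K]\setminus T)$ down to an index set $J$ with $|J| \leq 3N+2$ (Lemma~\ref{lem:num2}); then construct a balanced 3-layer network with widths $d_1 = d_2 = 3\sqrt{3N+2}$ whose restriction to the chosen line is the continuous piecewise-linear interpolant through $\{(\va^T\vx_j, z_j)\}_{j \in J}$. Zero outputs on the removed interior indices then follow because the two surrounding elements of $J$ are endpoints of the same block of $\mathcal{P}([K]\setminus T)$, hence both labeled $0$, and the interpolant is affine between them.

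\textbf{Main obstacle.} The hard part is making the 3-layer constructions respect the zero constraint on $[K]\setminus T$ (or $[K]\setminus J$ in the reduction-based variant). Off-the-shelf 3-layer memorization networks only control the value at the target coordinates and can place spurious breakpoints between them. I would force every breakpoint introduced by the second hidden layer to coincide with a projected target $\va^T \vx_j$, so that $g_\theta$ is affine on each sub-interval between consecutive targets and zero endpoints propagate throughout; this is the 3-layer analogue of the role played by Lemma~\ref{lem:linear} in the 2-layer proof. Likewise, in the hybrid construction the triangular correction bumps must be confined to arbitrarily thin intervals around each index of $T$, which is feasible because the projected coordinates are distinct. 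Once these structural constraints are built into the constructions, verifying the constants $2\sqrt{K}$, $3N$, and $6\sqrt{3N+2}$ is a careful but routine counting exercise.
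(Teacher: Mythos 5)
Your lower bound and your first two constructions track the paper closely. The lower bound reuses the adversarial one-dimensional configuration of Section~\ref{sec:two_layer_lower} together with a breakpoint count for 3-layer ReLU networks restricted to a line; your count $d_1 d_2 + d_1 + d_2$ is slightly tighter than the $2d_1 d_2 + d_2$ the paper imports from \citep{yun2019small}, but after maximizing over $d_1 + d_2 = m$ both relax to $m(m+1)/2$ and give the same conclusion $m \ge \sqrt{2\min\{3N,K-2\}+1/4}-1/2$. Your $4\sqrt{K}$ and $2\sqrt{K}+3N$ constructions are the paper's $U_1$ and $U_2$: keep the grouping first layer of \citep{yun2019small}, then either memorize all $K$ zero-extended labels or attach one bump unit (three ReLUs) per element of $T$ in the second layer.

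The gap is in your $6\sqrt{3N+2}$ construction. The reduction to the index set $J$ with $|J|\le 3N+2$ via Theorem~\ref{lem:ftc_upp} and Lemma~\ref{lem:num2} is exactly the paper's, but your mechanism for the removed indices --- build a width-$3\sqrt{3N+2}$ network whose restriction to the line is the \emph{exact} piecewise-linear interpolant through $\{(\va^T\vx_j, z_j)\}_{j\in J}$, with every second-layer breakpoint forced onto a data point --- does not go through at this width. The breakpoints contributed by second-layer neuron $k$ are the zero crossings of its pre-activation $q_k$, and $q_k$ restricted to each of the $d_1+1$ intervals cut out by the first layer is affine with slope and crossing location in \emph{all} intervals determined by the same $d_1+1$ shared parameters $(w_{k\cdot}, c_k)$. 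Prescribing both the kink location and the slope jump in each of $\Theta(\sqrt{N})$ groups imposes about $2d_1$ constraints on $d_1+1$ free parameters per neuron, so an arbitrary interpolant with all kinks at data points is over-constrained; this coupling is precisely why the constructions of \citep{yun2019small} never attempt exact interpolation. The paper instead keeps the grouping construction as is (so $g_\theta$ is \emph{not} affine between consecutive elements of $J$) and obtains zero output at a removed index $i$ by a different route: monotonicity of the first layer sandwiches $\alpha_j^2(\vx_i)$ between the pre-activations of the two surviving neighbors of $i$ in $J$ (both labeled $0$, being endpoints of the same block of $\mathcal{P}([K]\setminus T)$), the second-layer weights are taken entrywise positive, and the clipped activation $\sigma_L$ combined with scaling along the null-space direction $\lambda\nu$ forces those sandwiched values to output $0$. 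You would need to adopt that sandwich-plus-clipping argument, or supply a genuinely new interpolation scheme; as written, the affineness your zero-propagation step relies on is unjustified.
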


\begin{remark}
The upper bound in Theorem 5.1 directly shows that $m^{\star} \le \Theta(\sqrt{N})$.
    The lower bound in Theorem 5.1 indicates two facts: 
    \begin{itemize}
        \item If $3N \le K-2$, then $m^{\star} \ge \Theta(\sqrt{N})$,
        \item If $3N > K-2$, then $m^{\star} \ge \Theta(\sqrt{K})$. Combining this with $K \ge N$, we have $m^{\star} \ge \Theta(\sqrt{N})$.
    \end{itemize} 
    All in all, $m^{\star} = \Theta(\sqrt{N})$.

\begin{corollary} [FTC of 3-layer FC ReLU]\label{coro:ftc3NN}
    For given $m \in \mathbb{N}$, let $N^{\star}$ be the fine-tuning capacity of a 3-layer fully-connected ReLU network $g$ given in \eqref{eq:3l} with $m$ neurons.
    Then, $N^{\star}$ is bounded as below, for different range of $K$:
    \begin{enumerate}
        \item 
            If $K \le \left\lfloor \frac{m^2}{16} \right\rfloor$, then
            \begin{align}
            \label{eq:ftc3l1}
            N^{\star} = K.
            \end{align}
        \item 
            If $\left\lfloor \frac{m^2}{16} \right\rfloor + 1 \le K < \frac{m^2+m+4}{2}$, then
            \begin{align}
            \label{eq:ftc3l2}
            \left\lfloor \frac{m^2}{108} - \frac{2}{3} \right\rfloor \le N^{\star} \le K.
            \end{align}
        \item 
            If $K \ge \frac{m^2+m+4}{2}$, then
            \begin{align}
            \label{eq:ftc3l2}
            \left\lfloor \frac{m^2}{108} - \frac{2}{3} \right\rfloor \le N^{\star} \le \frac{m^2+m}{6}.
            \end{align}
    \end{enumerate}
\end{corollary}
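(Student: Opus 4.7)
The plan is to invert the two-sided bounds on $m^{\star}$ established in Theorem~\ref{thm:ftc_add_bound_3layer} to obtain the corresponding two-sided bounds on $N^{\star}$, mirroring the proof of Corollary~\ref{coro:ftc}. In each case I fix $m$ and solve the inequalities for $N$, splitting on which branch of the inner $\min$ is active.

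For the lower bound on $N^{\star}$, I would apply part~1 of Theorem~\ref{thm:ftc_add_bound_3layer}. In Case~1, when $K \le \lfloor m^2/16 \rfloor$, setting $N = K$ and invoking the branch $2\sqrt{K} + \min\{2\sqrt{K}, 3N\}$ gives a network with at most $4\sqrt{K} \le m$ neurons, so $N^{\star} = K$ (the reverse inequality $N^{\star} \le K$ is trivial). In Cases~2 and~3, I would use the other branch, requiring $6\sqrt{3N+2} \le m$; this rearranges to $N \le m^2/108 - 2/3$, and since $N$ must be an integer we obtain $N^{\star} \ge \lfloor m^2/108 - 2/3 \rfloor$.

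For the upper bound on $N^{\star}$, I would apply part~2 of Theorem~\ref{thm:ftc_add_bound_3layer}. Squaring $m \ge \sqrt{2\min\{3N, K-2\} + 1/4} - 1/2$ yields $\min\{3N, K-2\} \le (m^2+m)/2$. In Case~3, the hypothesis $K \ge (m^2+m+4)/2$ forces $K-2 \ge (m^2+m)/2$, so the minimum must be attained by $3N$, giving $N \le (m^2+m)/6$. In Cases~1 and~2, the hypothesis does not force this, and only the trivial bound $N^{\star} \le K$ can be extracted this way.

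The algebra is routine; the main care point will be matching the case thresholds exactly as stated. Specifically, I need to verify that $4\sqrt{K} \le m$ is equivalent to $K \le \lfloor m^2/16 \rfloor$ (using the integrality of $K$), and that $K \ge (m^2+m+4)/2$ is the precise condition under which the nontrivial upper bound $(m^2+m)/6$ supersedes the trivial bound $N \le K$, since this is where $K-2 \ge (m^2+m)/2$. Once these boundary verifications are in place, assembling the three cases of the corollary from the two bounds on $m^{\star}$ is immediate.
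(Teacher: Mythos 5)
Your proposal is correct and follows essentially the same route as the paper: the lower bounds on $N^{\star}$ come from Theorem~\ref{thm:ftc_add_bound_3layer}.1 by selecting the $4\sqrt{K}$ branch in Case~1 and the $6\sqrt{3N+2}$ branch in Cases~2--3 (with the same floor computation), and the upper bound in Case~3 comes from inverting Theorem~\ref{thm:ftc_add_bound_3layer}.2 using $K-2\ge (m^2+m)/2$, exactly as in the paper. The one care point you should note is that at the boundary $K=\frac{m^2+m+4}{2}$ the step ``the minimum must be attained by $3N$'' is not strictly forced (since $K-2=(m^2+m)/2$ there), but this same looseness is present in the paper's own derivation and does not distinguish your argument from it.
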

\begin{proof}
    Suppose $K \le \left\lfloor \frac{m^2}{16} \right\rfloor$. By Theorem \ref{thm:ftc_add_bound_3layer}.1, for $|T|= K$, there exists a 3-layer fully-connected ReLU network $g$ where the number of neurons of $g$ is less than or equal to
    \begin{align*}
        \min\{ 4\sqrt{K}, 2\sqrt{K}+3K, 6\sqrt{3K+2} \} = 4\sqrt{K} \le m.
    \end{align*} This implies that $N \ge K$, and since $N\le K$ always holds, we can conclude $N^{\star} = K$.

    Suppose $K \ge \left\lfloor \frac{m^2}{16} \right\rfloor + 1$. 
    By Theorem \ref{thm:ftc_add_bound_3layer}.1, for $|T|=\left\lfloor 
    \frac{m^2}{108}-\frac{2}{3} \right\rfloor$, there exists a 3-layer fully-connected ReLU network $g$ where the number of neurons of $g$ is less than or equal to
    \begin{align}
        &\min\{ 4\sqrt{K}, 6\sqrt{3\left\lfloor 
        \frac{m^2}{108}-\frac{2}{3} \right\rfloor+2} \} \\
        &= 6\sqrt{3\left\lfloor 
        \frac{m^2}{108}-\frac{2}{3} \right\rfloor+2} \le m,
    \end{align}
    where the inequality is derived from the fact that $4\sqrt{K} \ge m$ for 
    $K \ge \left\lfloor \frac{m^2}{16} \right\rfloor + 1$, and $6\sqrt{3\left\lfloor 
    \frac{m^2}{108}-\frac{2}{3} \right\rfloor+2} \le 6\sqrt{3 \left( 
    \frac{m^2}{108}-\frac{2}{3} \right)+2} = m$ .
    Now we can conclude that $N^{\star} \ge \left\lfloor 
    \frac{m^2}{108}-\frac{2}{3} \right\rfloor$.

    Now we derive the upper bound on $N^{\star}$. 
    If $K \ge \frac{m^2+m+4}{2}$, then
    \begin{align*}
        m &\ge \sqrt{2\min \{3N, K-2\}+\frac{1}{4}} - \frac{1}{2} \\ &\ge \sqrt{\min{\{ 6N, m^2+m \}} + \frac{1}{4}} - \frac{1}{2}
        \\ &= \min{\left\{ \sqrt{6N+\frac{1}{4}}, \, m+ \frac{1}{2}  \right\}} - \frac{1}{2} 
        \\ &= \min{\left\{ \sqrt{6N+\frac{1}{4}} - \frac{1}{2}, \, m \right\}},
    \end{align*}
    where the first inequality is from Theorem \ref{thm:ftc_add_bound_3layer}.2, and the second inequality is from $K \ge \frac{m^2+m+4}{2}$.
    We can simplify the above inequalities as $m \ge \sqrt{6N+\frac{1}{4}} - \frac{1}{2}$. Since this holds for any number of fine-tunable samples $N$, we have $N^{\star} \le \frac{m^2+m}{6}$.

    If $\left\lfloor \frac{m^2}{16} \right\rfloor + 1 \le K < \frac{m^2+m+4}{2}$, we use a trivial upper bound 
    $N^{\star} \le K$, which completes our proof. 
\end{proof}

 \begin{figure}[t!]
    \vspace{-5mm}
    \centering
    \includegraphics[width=8cm]{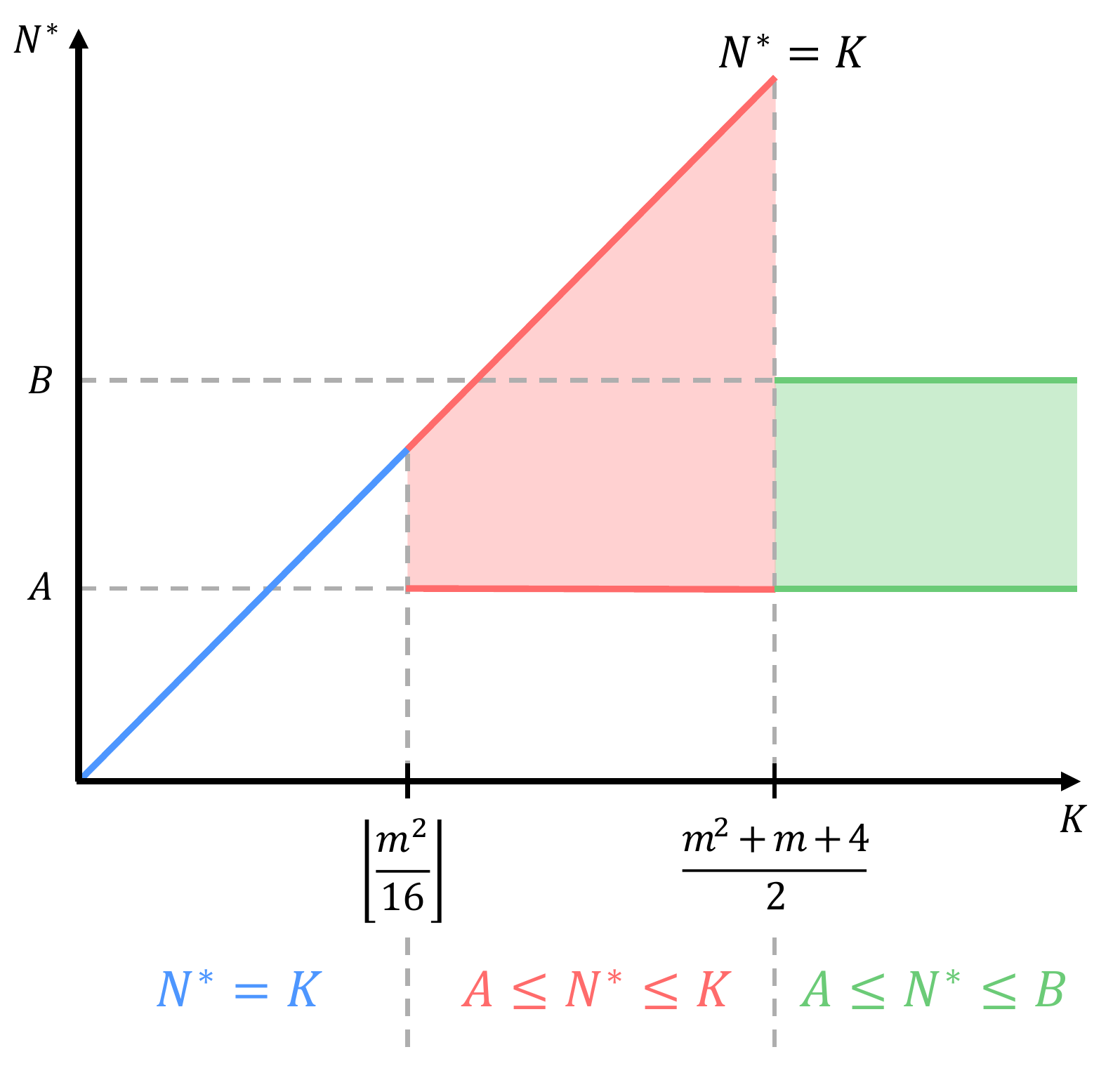}
    \vspace{-2mm}
    \caption{
    Visualization of FTC for 3-layer network in corollary ~\ref{coro:ftc3NN}. In this figure, $A = \left\lfloor\frac{m^2}{108}-\frac{2}{3}\right\rfloor$ and $B = \frac{m^2+m}{6}$.
    }
    \vspace{-2mm}
    \label{fig:bound}
    \end{figure}

\end{remark}

Fig.~\ref{fig:bound} illustrates the results in Corollary~\ref{coro:ftc3NN}. For different ranges of $K$, we have either a constant $N^{\star} = K$, or upper/lower bounds on $N^{\star}$. 

\subsection{Proof of Lower Bound on $m$}\label{sec:three_layer_lower}
\vspace{-3mm}

We follow the proof technique used in section ~\ref{sec:two_layer_lower}. Since the number of minimum pieces $p(K)$ provided in Eq.~\ref{eqn:p_K} is still valid for 3-layer neural network, we just need to check how many pieces $\bar{g}_{\theta}(t)$ has. As stated in the proof of Theorem 3.3 of ~\citep{yun2019small}, 
$\bar{g}_{\theta}(t)$ has $2d_1d_2+d_2+1$ pieces, where $d_1$ and $d_2$ are the number of neurons in layer 1 and 2, respectively. 
Thus, we have
\begin{align*}
    \min \{3N, K-2\} \le 2d_1d_2+d_2 \le \frac{m^2}{2} + \frac{m}{2},
\end{align*} 
where the last inequality is from the fact that $2d_1d_2+d_2$ is having its maximum value when $d_1 = d_2 = m/2$. 
Reformulating the above inequality with respect to $m$ completes the proof.

\subsection{Proof of Upper Bound on $m$}\label{sec:three_layer_upper}

We can prove $m \le U$ for an upper bound $U$ by constructing a 3-layer neural network $g_{\theta}$ having $U$ neurons, which satisfies Eq.~\ref{eqn:finetune_fit_well} for given $N$ and $K$. Below we construct different types of neural networks satisfying the condition, where each construction gives different upper bounds $U_1 = 4\sqrt{K}, U_2 = 2\sqrt{K} + 3N$ and $U_3  = 6\sqrt{3N+2}$. This completes the proof of $m \le \min\{U_1, U_2, U_3\}$.

    \begin{figure}[t!]
    \vspace{-5mm}
    \centering
    \includegraphics[width=8cm]{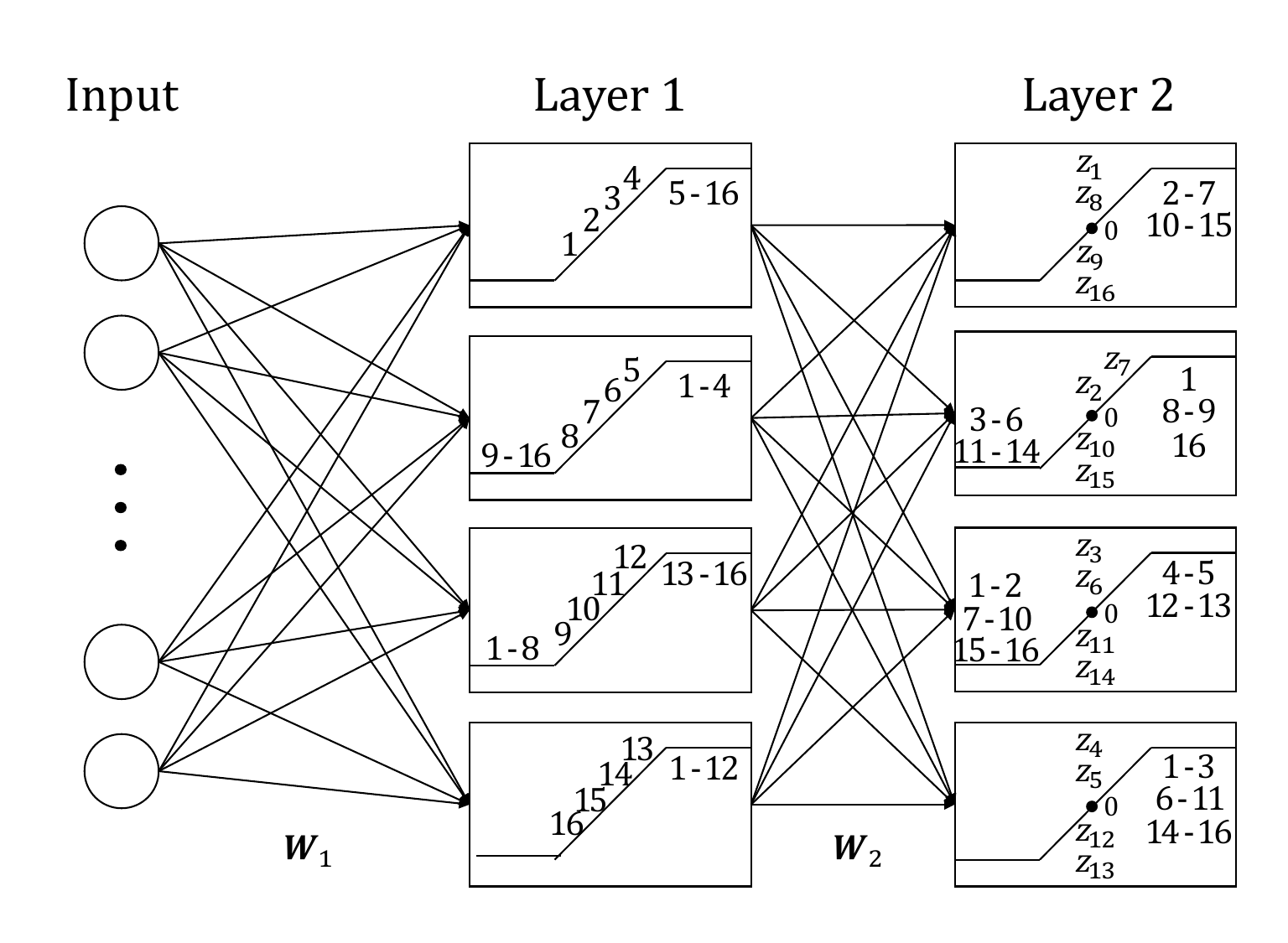}
    \vspace{-2mm}
    \caption{
    Construction of 3-layer network that achieves upper bound $U_1 = 4\sqrt{K}$ in Sec.~\ref{sec:three_layer_upper}. This construction is directly from~\citep{yun2019small}, and each neuron uses hard-tanh activation in Eq.~\ref{eqn:hard-tanh}.
    This illustration gives an example when $T= \{7\}$, \ie we change the label for $\vx_7$, while maintaining the label for other samples.
    }
    \vspace{-2mm}
    \label{fig:3layer2sqrtK}
    \end{figure}
       
    \paragraph{Proof of $m \le  4\sqrt{K}$:}

    Recall the neural network  $g_{\theta}$ constructed in the proof of Theorem 3.1 of~\citep{yun2019small}, containing $\sqrt{K}$ neurons in both 1st layer and 2nd layer. See Fig.~\ref{fig:3layer2sqrtK} for the illustration of such network, when $K=16$ and $N=1$, where the index of fine-tuning sample is $T = \{7\}$. In such case $z_7 \ne 0$ from the definition of $T$.
    Each box (node) in the figure is a neuron, where the curve inside the box represents the activation function of the neuron. In this figure, each neuron uses hard-tanh activation defined as 
    \begin{align}\label{eqn:hard-tanh}
        \sigma_H(x) = \begin{cases}
            -1, & \quad t \le -1 \\
            t, & \quad -1 < t < +1 \\
            +1, & \quad t \ge +1,
        \end{cases}
    \end{align}
    where $[-1, 1]$ is the \textit{non-clipping} region of $\sigma_H$, and $[-1, 1]^c$ is the \textit{clipping} region of $\sigma_H$.
    Note that the digit $i$ in the box represents the location where the feature $\vx_i$ for the $i$-th sample is mapped to. 
    For example, for the first neuron of layer 1, we have $\alpha_1^1 (\vx_i)  \in [-1,1]$ for $i \in \{1,2,3,4\}$ and $\alpha_1^1 (\vx_i) >  1$ for $i \in \{5, 6, \cdots, 16\}$, where 
    \begin{align}\label{eqn:alpha}
        \alpha_j^l (\vx) = \mW_{l,j} \vx + b_{l,j}
    \end{align} 
    is the input value of node $j$ in layer $l$, when the input for the network $g_{\theta}$ is given as $\vx$. Here, the weight matrix and the bias for layer 1 are denoted by $\mW_1 = [\mW_{1,1}^T; \cdots; \mW_{1,\sqrt{K}}^T]$ and $\vb_1 = [b_{1,1}, \cdots, b_{1,\sqrt{K}}]$, respectively. 
    Given target $\{z_i\}_{i=1}^K$, the proof of Theorem 3.1 of~\citep{yun2019small} specified parameters $\theta = [ \mW_{1}, \mW_{2}, \mW_3, \vb_{1}, \vb_{2}, b_3]$ satisfying 
        $g_{\theta}(\vx_i) = z_i$ for all $i \in [K].$
    Assigning $z_i = 0$ for $i \in [K] \setminus T$ 
    and reusing these parameters is enough to satisfy the desired condition for fine-tuning in Eq.~\ref{eqn:finetune_fit_well}. Note that this network uses $2\sqrt{K}$ neurons with hard-tanh activations, which can be converted to a ReLU neural network with $4\sqrt{K}$ neurons, using the fact that one hard-tanh neuron can be expressed with two ReLU neurons. This directly proves that $4\sqrt{K}$ ReLU neurons are sufficient for changing the labels of $N \le K$ samples.

    \begin{figure}[t!]
    \vspace{-5mm}
    \centering
    \includegraphics[width=8cm]{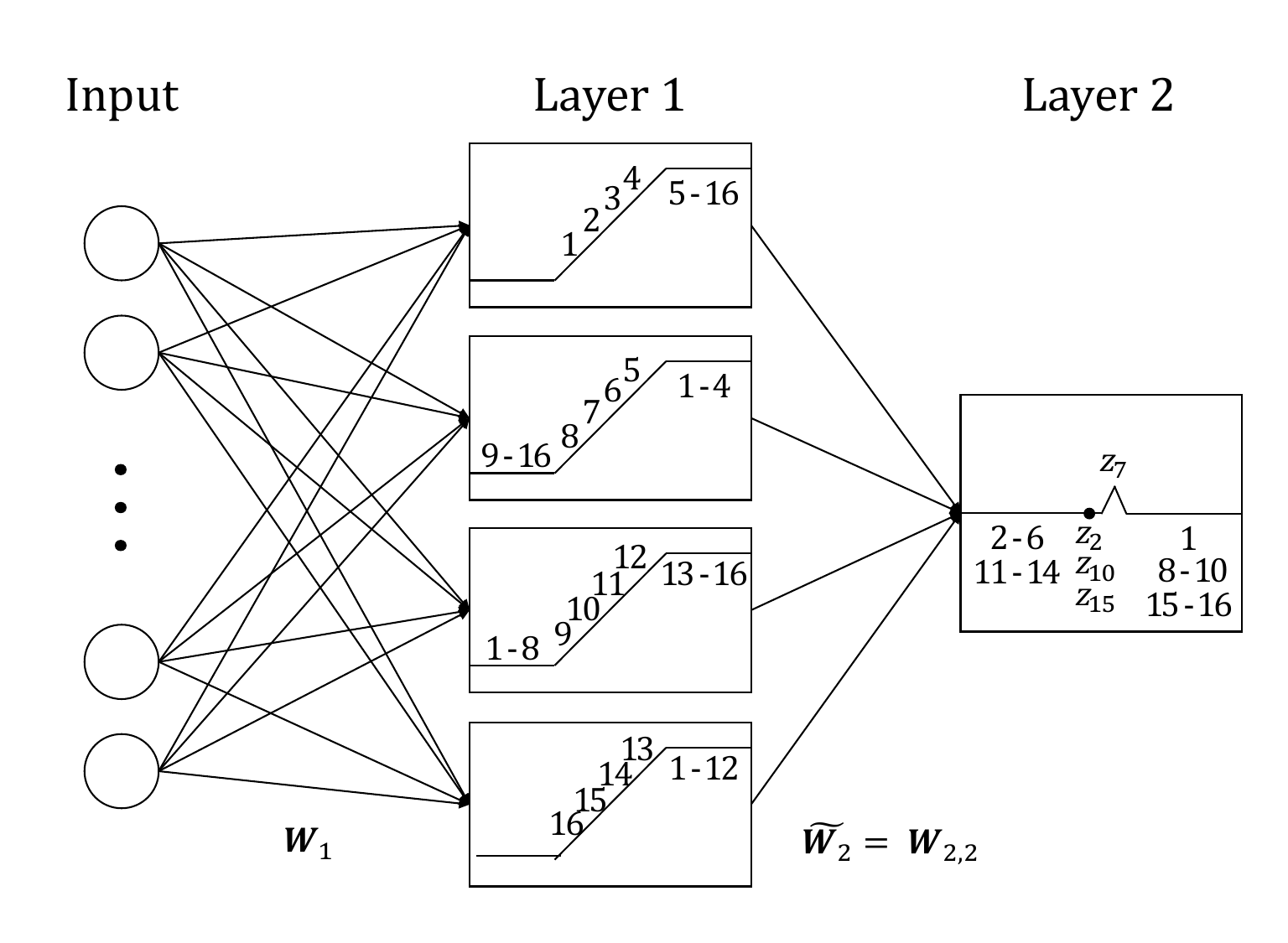}
    \vspace{-2mm}
    \caption{
    Construction of 3-layer network that achieves upper bound $U_2 = 2\sqrt{K} + 3N$ in Sec.~\ref{sec:three_layer_upper}. 
    This illustration gives an example when $T= \{7\}$.
    }
    \vspace{-3mm}
    \label{fig:3layer3N}
    \end{figure}

\paragraph{Proof of $m \le  2\sqrt{K} + 3N$:}
    We construct a 3-layer neural network with $U_2 = 2\sqrt{K} + 3N$ neurons which successfully fine-tunes $N$ samples. 
    Fig.~\ref{fig:3layer3N} illustrates the example of such construction when $K=16$ and $N=1$. Here, we assumed $T = \{7\}$, \ie the label for 7-th sample is fine-tuned, but similar proof can be applied to arbitrary $T$ with $|T| = N =1$. 
    Here, our goal is to construct $g_{\theta}$ satisfying   $g_{\theta} (\vx_7) = z_7 \ne 0$ and  $g_{\theta} (\vx_i) = 0$ for all $i \ne 7$.
    Our basic idea is, to follow the construction in Fig.~\ref{fig:3layer2sqrtK},
    except the activation used in layer 2. The activation in layer 2 is defined as
    \begin{align*}
        \sigma_B (t) = \begin{cases}
            0, & t \notin [z_7 - \delta, z_7 + \delta] \\
            \frac{z_7}{\delta} (t - z_7 + \delta),  & t \in [z_7 - \delta, z_7) \\
            - \frac{z_7}{\delta} (t - z_7 - \delta),  & t \in [z_7, z_7 + \delta) \\
        \end{cases}
    \end{align*} 
    for arbitrary $\delta < \min_{i \ne j} \frac{\lvert z_i - z_j  \rvert}{2}$, which is having a small bump near $z_7$ as in Fig.~\ref{fig:3layer3N}. 
    Among $\sqrt{K}$ hard-tanh neurons in layer 2 in Fig.~\ref{fig:3layer2sqrtK}, we only choose the neuron containing $z_7$ (the non-zero target label) in the non-clipping region of the activation, \eg the second neuron of layer 2. 
    Given that $\mW_2$ is the weight matrix for 2nd layer in the construction of Fig.~\ref{fig:3layer2sqrtK}, the weight matrix for 2nd layer in the construction of Fig.~\ref{fig:3layer3N} is defined as $\widetilde{\mW_2}  := \mW_{2,2}$, the 2nd row of $\mW_2$. 
    Then, the overall network looks like $g_{\theta}(\vx) = \sigma_B ( \mW_{2,2} \sigma_H (\mW_1 \vx + \vb_1) + \vb_2) $, which satisfies $g_{\theta}(\vx_7) = z_7$ and $g_{\theta}(\vx_i)=0$ for all $i \ne 7$.
    Note that the activation $\sigma_{B}$ with 4 pieces can be constructed by 3 ReLU neurons. Consider constructing $g_{\theta}$ by adding new neurons (with $\sigma_{B}$ activation) in layer 2 for each sample we want to fine-tune, which allocates total $3N$ ReLU neurons in layer 2. Since layer 1 (as in Fig.~\ref{fig:3layer2sqrtK}) contains $2\sqrt{K}$ ReLU neurons, one can confirm that our construction contains $2\sqrt{K} + 3N$ ReLU neurons.

\paragraph{Proof of $m \le 6 \sqrt{3N+2}$:}

    \begin{figure}[t!]
    \centering
    \includegraphics[width=8cm]{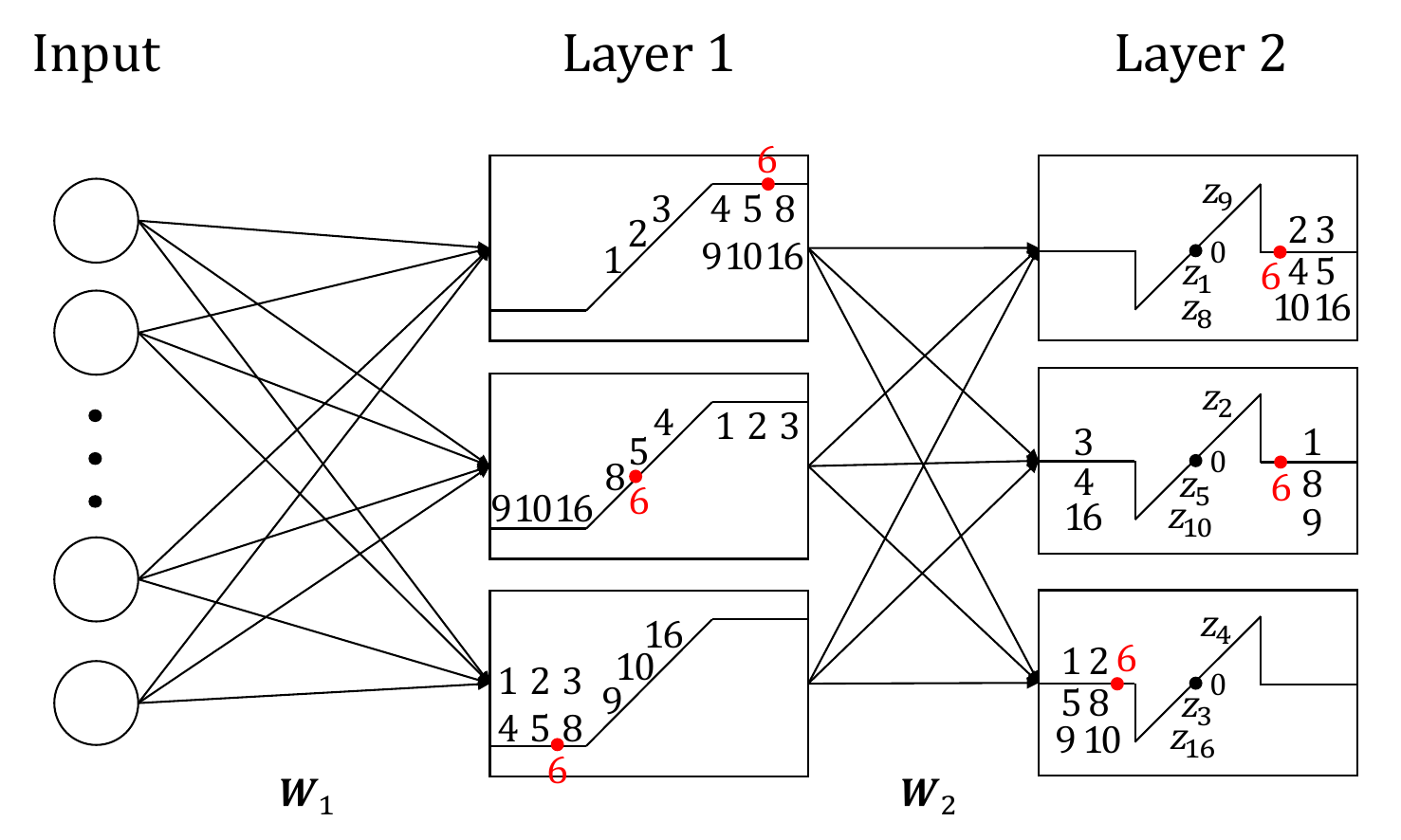}
    \caption{
    The construction of 3-layer network achieving the upper bound $U_3 = 6 
    \sqrt{3N+2}$ in Sec.~\ref{sec:three_layer_upper}. Here, we consider the case when $K=16$, $N=3$ and $T = \{ 2,4,9 \}$.
    }
    \label{fig:3layersqrt6N}
    \end{figure}
It is worth nothing that some techniques developed for showing $m \le 4N+4$ (shown in Appendix) are used to prove $m \le 6\sqrt{3N+2}$. Thus, we refer to some equations in Appendix during the proof.

We construct a neural network $g_{\theta}$ with $6\sqrt{3N+2}$ ReLU neurons, which fine-tunes $N$ samples. For simplicity, here we provide an example construction when $K = 16$, $N = 3$, and $T = \{2, 4, 9\}$. By using the definition of $J$ in Eq.~\ref{eqn:J-index-set}, we have $J = \{ 1, 2, 3, 4, 5, 8, 9, 10, 16 \}$. 
We will now construct a 3-layer neural network $g_{\theta}$ satisfying
\begin{align}
g_{\theta}(\vx_i) = z_i, \quad i \in J, \label{eqn:3-layer-goal-1}\\
g_\theta(\vx_i) = 0, \quad i \notin J, \label{eqn:3-layer-goal-2}
\end{align}
which is illustrated in Fig.~\ref{fig:3layersqrt6N}. 
Note that Eq.~\ref{eqn:3-layer-goal-1} can be easily proved by fitting the network using the samples $\vx_i$ with $i \in J$. In the rest of the proof, we will show that for such $g_{\theta}$ satisfying Eq.~\ref{eqn:3-layer-goal-1}, we can prove Eq.~\ref{eqn:3-layer-goal-2}. As an example, we will only show $g_\theta(\vx_6) = 0$, but a similar proof technique can be applied to arbitrary $\vx_i$ with $i \notin J$.

We construct the first layer by only using samples $\vx_i$ with $i \in J$. We partition 
the $|J|$ samples  into $\sqrt{|J|}$ groups, following the trick used  in \citep{yun2019small} for 3-layer network. 
Let us denote the first index of $j$-th group as $s_j^{min}$ and the last index of $j$-th group as $s_j^{max}$. In other words, $J$ is decomposed into $\sqrt{|J|}$ groups as $J = \{s_1^{min}, \cdots, s_1^{max},  \} \cup \cdots \cup \{ s_{\sqrt{|J|}}^{min}, \cdots, s_{\sqrt{|J|}}^{max} \} $.

Without loss of generality, we can assume that samples are ordered as $\vv^T\vx_1<\vv^T\vx_2<\dotsi<\vv^T\vx_K$ for some $\vv$. Let $c_i := \vv^T\vx_i$ and define $c_0 = c_1-\epsilon$, $c_{K+1} = c_K+\epsilon$ for arbitrary $\epsilon > 0$. Then, we choose $\mW_1, b_1$ as in Eq.~\ref{eqn:firstlayerweights}, using $\vv$ and $\epsilon$ defined above.

Here, we focus on the relationship between the outputs of layer 1, when the neural network inputs are $\vx_5, \vx_6, \vx_8$, respectively. 
From the definition of $c_i$, we have $c_6 \in (c_5, c_8)$. Using Eq.~\ref{eqn:alpha} and Eq.~\ref{eqn:firstlayerweights}, we have 
$\alpha_j^1 (\vx_6) \in (\alpha_j^1 (\vx_5), \alpha_j^1 (\vx_8))$, meaning that the input $\alpha_j^1$ of layer 1 (for $\vx_6$) is bounded by $\alpha_j^1$ for $\vx_5$ and $\alpha_j^1$ for $\vx_8$. After passing it through the ReLU activation, we also have 
\begin{align}\label{eqn:beta1_sandwich}
 \beta_j^1 (\vx_6) \in (\beta_j^1 (\vx_5), \beta_j^1 (\vx_8)),
\end{align}
for all $j \in \{1, \cdots, \sqrt{J}\}$
using the monotonicity of ReLU. Kore precisely, 
we have 
$\beta_1^1(x_6)=+1$, $-1\le\beta_2^1(x_6)\le+1$, and $\beta_3^1(x_6)=-1$. This is illustrated in the first layer in   Fig.~\ref{fig:3layersqrt6N}.
Now we move to the construction of the second layer. Recall that the main idea of constructing $\mW_2, b_2$ is using the linear system in Eq.~\ref{eqn:linear_system}. 
Using Eq.\ref{eqn:alpha_1^2} and the fact that we can set all elements of $\mW_2$ as positive (as shown in~\citep{yun2019small}), Eq.~\ref{eqn:beta1_sandwich} implies 
 $   \alpha_j^2 (\vx_6) \in (\alpha_j^2 (\vx_5), \alpha_j^2 (\vx_8))$
for all $j \in \{1, \cdots, \sqrt{J}\}$. Finally, we set the activation function of layer 2 as
\begin{align*}
    \sigma_L(t) = \begin{cases}
        t, & \text{ if } |t| \le 1, \\
        0, & \text{ if } |t| \ge 1
    \end{cases}
\end{align*}
and arbitrarily increase $\lambda$ such that $\beta_j^2(\vx_6) = 0$. 
Then, the output of the network 
$g_{\theta}(\vx) = \sigma_L(\mW_2(\sigma_H(\mW_1 \vx + \vb_1)) + \vb_2)$
satisfies $g_{\theta}(\vx_6) = 0$.

Now the question is, what is the upper bound on $|J|$? Recall that Lemma~\ref{lem:num2} guarantees that $|J|\le3N+2$. 
Since $\sigma_H$ and $\sigma_L$ can be represented by 2 ReLUs and 4 ReLUs, respectively, $2\sqrt{3N+2}+4\sqrt{3N+2}=6\sqrt{3N+2}$ ReLU neurons are sufficient for our 3-layer network construction.

\section{Extension to other neural networks}

For deeper neural networks, a lower bound on $N$ can be obtained in terms of the maximum width $d$, the number of layers $L$, and the number of pre-trained samples $K$. 

\begin{proposition}
\label{prop:ext}
For $L \geq 4$, $K \ge 3$, $T \subseteq [K]$, $|T|=N$, there exists an $L$-layer ReLU network with maximum width $d$ satisfying \eqref{eqn:finetune_fit_well} and  $$ d \leq   4\min\left\{\sqrt{\frac{3N}{\sqrt{\left\lfloor\frac{L-1}{2}\right\rfloor}}+5}, \sqrt{K}\right\} + 2.$$
\end{proposition}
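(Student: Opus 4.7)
The plan is to extend the 3-layer upper-bound construction of Theorem~\ref{thm:ftc_add_bound_3layer} (specifically the $m\le 6\sqrt{3N+2}$ argument from Section~\ref{sec:three_layer_upper}) by distributing the memorization work across the $k:=\lfloor (L-1)/2\rfloor$ layer pairs available inside an $L$-layer network. First I would carry over the same reduction: by discarding the interior of each block of $\mathcal{P}([K]\setminus T)$, obtain a subset $J\subseteq [K]$ of size $|J|\le\min\{3N+2,K\}$ (Lemma~\ref{lem:num2}) and reduce to constructing $g_\theta$ with $g_\theta(\vx_i)=z_i$ for $i\in J$ and $g_\theta(\vx_i)=0$ for $i\in[K]\setminus J$. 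The sandwich/monotonicity argument used in Section~\ref{sec:three_layer_upper} is layer-local, so it propagates through arbitrary depth: the zero-on-$[K]\setminus J$ condition is automatic as long as each non-$J$ sample stays sandwiched between two $J$-samples of value $0$ in every intermediate representation.

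For the deep construction, I would split $J$ into $k$ nearly-equal consecutive blocks $J_1,\dots,J_k$ of size at most $\lceil |J|/k\rceil$, and dedicate the $(2i-1)$-th and $(2i)$-th layers to a hard-tanh-plus-bump sub-module that memorizes the targets on $J_i$ in the style of Figures~\ref{fig:3layer2sqrtK} and \ref{fig:3layersqrt6N}, with width $O(\sqrt{|J_i|})$. Running alongside the active sub-module in every layer is a constant-width passthrough that carries the scalar projection $\va^T\vx$ (split into positive and negative ReLU parts) and the running sum of contributions produced by the earlier sub-modules. The final layer aggregates the passthrough output with the last sub-module's output to form $g_\theta(\vx)$.

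Counting neurons layer-by-layer gives a per-layer width of $O(\sqrt{|J|/k})+O(1)$; substituting $|J|\le 3N+2$ and $k=\lfloor(L-1)/2\rfloor$ and applying the standard factor-$2$ and factor-$4$ overheads for converting hard-tanh and bump activations into ReLU yields the first branch of the minimum, $4\sqrt{3N/\sqrt{\lfloor (L-1)/2\rfloor}+5}+2$. The alternative bound $4\sqrt{K}+2$ is obtained by simply applying the 3-layer $U_1$-construction of Section~\ref{sec:three_layer_upper} directly to all $K$ samples and propagating the resulting scalar output through the remaining $L-3$ layers as an identity pair $x\mapsto \sigma(x)-\sigma(-x)$. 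Taking the smaller of the two widths recovers the stated bound.

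The hard part will be verifying that the passthrough channel and the active sub-module coexist without interference across all $L$ layers: the $\va^T\vx$ channel must remain linear through every ReLU (hence the positive/negative split), the accumulated partial sum must stay inside the non-clipping region of each downstream $\sigma_L$-style activation, and the sandwich property for non-$J$ samples must survive simultaneously in every layer. Each of these reduces to choosing the clipping scales and weights of each sub-module large enough that non-$J$ inputs never trigger a bump of any module and that previously committed contributions are not re-clipped, a calculation directly analogous to the single-module sandwich argument already developed in Section~\ref{sec:three_layer_upper}.
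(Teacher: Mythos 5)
Your overall architecture is the same as the paper's: split the fine-tuning task into $k=\lfloor(L-1)/2\rfloor$ sub-tasks, realize each with the 3-layer construction of Section~\ref{sec:three_layer_upper} occupying one pair of layers, and chain the pieces with constant-width passthrough channels carrying $\va^T\vx$ and the running sum, exactly as in Figure 2 of \citep{yun2019small}; the $4\sqrt{K}+2$ branch and the final ReLU accounting are also identical. The one substantive difference is \emph{what} you partition, and it is where your version develops a gap. The paper partitions $T$ itself into $T_1,\dots,T_k$ with $|T_i|\le N/k+1$ and invokes Theorem~\ref{thm:ftc_add_bound_3layer}.1 as a black box on each sub-problem $(T_i,K)$: each $g_{\theta,T_i}$ then vanishes on \emph{all} of $[K]\setminus T_i$ --- including the points of $T\setminus T_i$, which are simply zero-target points for that sub-problem --- so the sum $\sum_i g_{\theta,T_i}$ satisfies \eqref{eqn:finetune_fit_well} with no interference analysis at all, and the width bound $4\sqrt{3|T_i|+2}\le 4\sqrt{3N/k+5}$ drops out immediately. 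You instead partition the reduced index set $J$ (of size $\le 3N+2$) into consecutive blocks $J_1,\dots,J_k$. This forces each sub-module $i$ to output $0$ not only on the sandwiched non-$J$ points but also on the points of $J\setminus J_i$, which carry \emph{nonzero} targets and are therefore not handled by the sandwich/monotonicity argument; likewise, a non-$J$ point lying between the last element of $J_i$ and the first element of $J_{i+1}$ is no longer bracketed by two zero-valued points \emph{within a single sub-module}. You flag the interference question as "the hard part" but do not resolve it, and resolving it is not a routine repetition of the single-module argument --- it would require showing that every point outside $J_i$'s range lands in the clipping region of that sub-module's second-layer activation after the $\lambda$-scaling, a claim that needs its own verification. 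The fix is simply to adopt the paper's decomposition: partition $T$, not $J$, and let each sub-network recompute its own reduced set internally.
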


The main challenge for proving Proposition~\ref{prop:ext} lies in constructing a suitable neural network, which can be addressed by utilizing our 3-layer network from Section~\ref{sec:ftc3} and the construction idea provided in Figure 2 of \citep{yun2019small}. See Appendix~\ref{sec:proof-ext} for the proof of Proposition~\ref{prop:ext}.

\begin{figure}[t!]
    \vspace{-5mm}
    \centering
    \includegraphics[width=8cm]{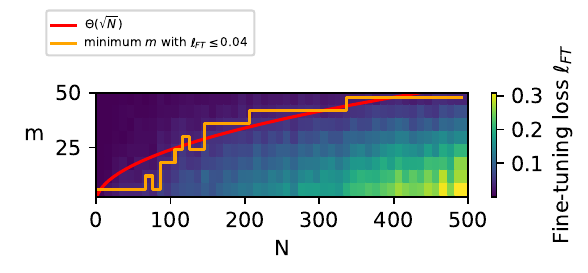}
    \vspace{-2mm}
    \caption{
    Fine-tuning loss $\ell_{\text{FT}}$ computed for a network trained on a synthetic dataset, for various $N$ and $m$. Here, the number of data points used for pre-training is set to $K=1000$.
    As shown in the orange line, the number of neurons $m$ required in the fine-tuning network to achieve a small loss $\ell_{\text{FT}}$ grows in the order of $\Theta(\sqrt{N})$, the square root of the number of modified samples. This result coincides with our theoretical result in Theorem \ref{thm:ftc_add_bound_3layer}.
    }
    \vspace{-2mm}
    \label{fig:experiments}
\end{figure}

\section{Experiments}

In this section, we provide experimental results on a synthetic dataset, which supports our theoretical results.
The experimental setup is as follows. 
We first randomly generate $K$ samples $D=\{(\vx_i, y_i)\}_{i=1}^K$ where the feature and the label of $i$-th sample have the following distributions: $\vx_i \sim N(0, I_d)$ and $y_i \sim \text{Unif}[-1, 1]$, where the feature dimension is $d=10$. Then, we train a network ReLU network $f$ that fits the dataset $D$, \ie~\eqref{eqn:new_dataset_label2} holds, thus having zero mean-squared-error (MSE) loss $\ell = \frac{1}{K} \sum_{i=1}^K (f(\vx_i)-y_i)^2$. 
Considering the fine-tuning scenario, we construct another dataset $D'=\{(\vx_i, y_i' )\}_{i=1}^K$ as follows. We first initialize $D' = D$. Then, we randomly choose $N$ out of $K$ samples in $D'$, and re-define the label of the $N$ samples as $y_i' \sim \text{Unif}[-1, 1]$.
Finally, we implement the fine-tuning process. Following our additive fine-tuning scenario, we freeze $f$ and train a 3-layer ReLU network $g_{\theta}$ with $m$ neurons, in a way that $f+g_{\theta}$ fits the new dataset $D'$. We define the fine-tuning loss as $\ell_{FT} = \frac{1}{K}\sum_{i=1}^K (f(\vx_i) + g_{\theta}(\vx_i)-y_i')^2$.

Figure~\ref{fig:experiments} shows the fine-tuning loss $\ell_{FT}$ for different $m$ and $N$. As expected, for a given $N$, the fine-tuning loss decreases as $m$ increases. For each $N$, the yellow line in the figure shows the minimum $m$ satisfying $\ell_{FT}(m,N) \le 0.04$. This yellow line indicates that the required number of neurons to achieve small fine-tuning loss follow the tendency of $\Theta(\sqrt{N})$ shown in the red line in the figure, which coincides with our theoretical result in Theorem~\ref{thm:ftc_add_bound_3layer}.

\section{Conclusion}
\vspace{-3mm}
We introduced Fine-Tuning Capacity (FTC), a generalization of memorization capacity concept for fine-tuning applications. 
This concept is defined to provide theoretical view on current paradigm of fine-tuning large pre-trained models. 
As an initial step towards analyzing FTC, we focused on the additive fine-tuning scenario where a side network is added to the frozen pre-trained network. 
We obtained upper/lower bounds on FTC for shallow ReLU networks.
For 2-layer network, we showed that fine-tuning $N$ samples is possible by using ReLU networks with $m = \Theta (N)$ neurons, irrespective of the size of the pre-trained network and the number of total samples $K$ used during pre-training.
For 3-layer network, the required amount of neurons reduces to $m = \Theta (\sqrt{N})$, for practical scenarios where the number of samples $N$ we want to change labels is far less than the number of total samples $K$ used for pre-training.

\appendix

\begin{acknowledgements}

    JS acknowledges the support of the National Research Foundation of Korea (NRF) grant funded by the Korea government (MSIT) No. RS-2024-00345351. DK was partially supported by the National Research Foundation of Korea (NRF) grant funded by the Korea government (MSIT) (No. RS-2023-00252516) and the POSCO Science Fellowship of POSCO TJ Park Foundation.
    KL was supported by the NSF Award CCF-2339978 and a grant from FuriosaAI.

\end{acknowledgements}

\newpage

\onecolumn

\title{Memorization Capacity for Additive Fine-Tuning\\(Supplementary Material)}
 \maketitle

 \appendix
\section{Proof of $m \le 4 N+4$ (deferred from Sec.~\ref{sec:three_layer_upper}) }

    We construct a 3-layer neural network $g_{\theta}$ with $4N+4$ ReLU neurons which fine-tune $N$ samples. 
    Note that we can find $\vv$ such that $\vv^T\vx_i \ne \vv^T\vx_j$ for all $i \ne j$, since we assume $\vx_i \ne \vx_j$ for all $i \ne j$. 
    Without loss of generality, we order samples such that $\vv^T\vx_1<\vv^T\vx_2<\dotsi<\vv^T\vx_K$. Let $c_i := \vv^T\vx_i$ and define $c_0 = c_1-\epsilon$, $c_{K+1} = c_K+\epsilon$ for arbitrary $\epsilon > 0$. 
    Recall that the indices for the samples we want to fine-tune is denoted by $T = \{T_1, \cdots, T_{N} \}$. We define dummy indices $T_0 = 1$ and $T_{N+1} = K$. Consider $2N+1$ groups of disjoint indices,
        \begin{align*}
        s_1 = \{T_0, \cdots, T_1 - 1\}, \quad 
        s_2 &= \{T_1 \}, \\
        s_3 = \{T_1 + 1, \cdots, T_2 - 1\}, \quad 
        s_4 &= \{T_2 \}, \\
        \cdots, \quad
        s_{2N} &= \{T_N\}, \\
        s_{2N+1} = \{T_N + 1, \cdots, T_{N+1} \},
    \end{align*}
    where group $s_1$ is empty when $T_1 = 1$, and group $s_{2N+1}$ is empty when $T_N = K$.  
    We denote the maximum/minimum element of set $s_j$ as $s_j^{\op{max}}$ and $s_j^{\op{min}}$, respectively, \ie $s_j^{\op{max}} = \max s_j$ and $s_j^{\op{min}} = \min s_j$.
    We place $2N+1$ neurons on layer 1, and define parameters for layer 1 as 
    \begin{align}\label{eqn:firstlayerweights}
        \mW_{1,j} &= (-1)^{j-1}\frac{4}{c_{s_j^{\op{max}}} +  c_{s_{j+1}^{\op{min}}} - c_{s_{j-1}^{\op{max}}} - c_{s_j^{\op{min}}}} \vv^T \\
        b_{1,j} &= (-1)^j\frac{
        c_{s_j^{\op{max}}} +  c_{s_{j+1}^{\op{min}}} +
        c_{s_{j-1}^{\op{max}}} + c_{s_j^{\op{min}}}}{c_{s_j^{\op{max}}} +  c_{s_{j+1}^{\op{min}}} - c_{s_{j-1}^{\op{max}}} - c_{s_j^{\op{min}}}}
    \end{align}
    for all $j =1, \cdots, 2N+1$. 
    Under such setting, it can be easily checked that
    \begin{align*}
        \alpha_j^1 (\vx_i) \in [-1, 1] \quad \text{ for } i \in s_j \\
        \alpha_j^1 (\vx_i) \notin [-1, 1] \quad \text{ for } i \notin s_j
    \end{align*}
    for all $j \in [2N+1]$, where $\alpha_j^l (\vx)$ defined in Eq.~\ref{eqn:alpha} is the input value of node $j$ in layer $l$, when the input for the network is $\vx$. 
    Fig.~\ref{fig:3layer4N+3} shows the example of such construction, when $K=16$, $N=2$ and $T = \{4,7\}$. In such case, we have $2N+1 = 5$ disjoint groups:
        \begin{align*}
        s_1 = \{1, 2, 3\}, \quad
        s_2 &= \{4\}, \quad
        s_3 = \{5,6\}, \\
        s_4 &= \{7\}, \quad
        s_5 = \{8, 9, \cdots, 16\}. 
    \end{align*}
    As in Fig.~\ref{fig:3layer4N+3}, the input of $j$-th neuron lie in the non-clipping region $\alpha_1^{j}(\vx_i) \in [-1, 1]$ when $i \in s_j$. For example, the first neuron ($j=1$) in the first layer has $\alpha_1^1 (\vx_i) \in [-1, 1]$ for $i \in s_1 = \{1,2,3\}$.

    \begin{figure}[t!]
    \vspace{-5mm}
    \centering
    \includegraphics[width=8cm]{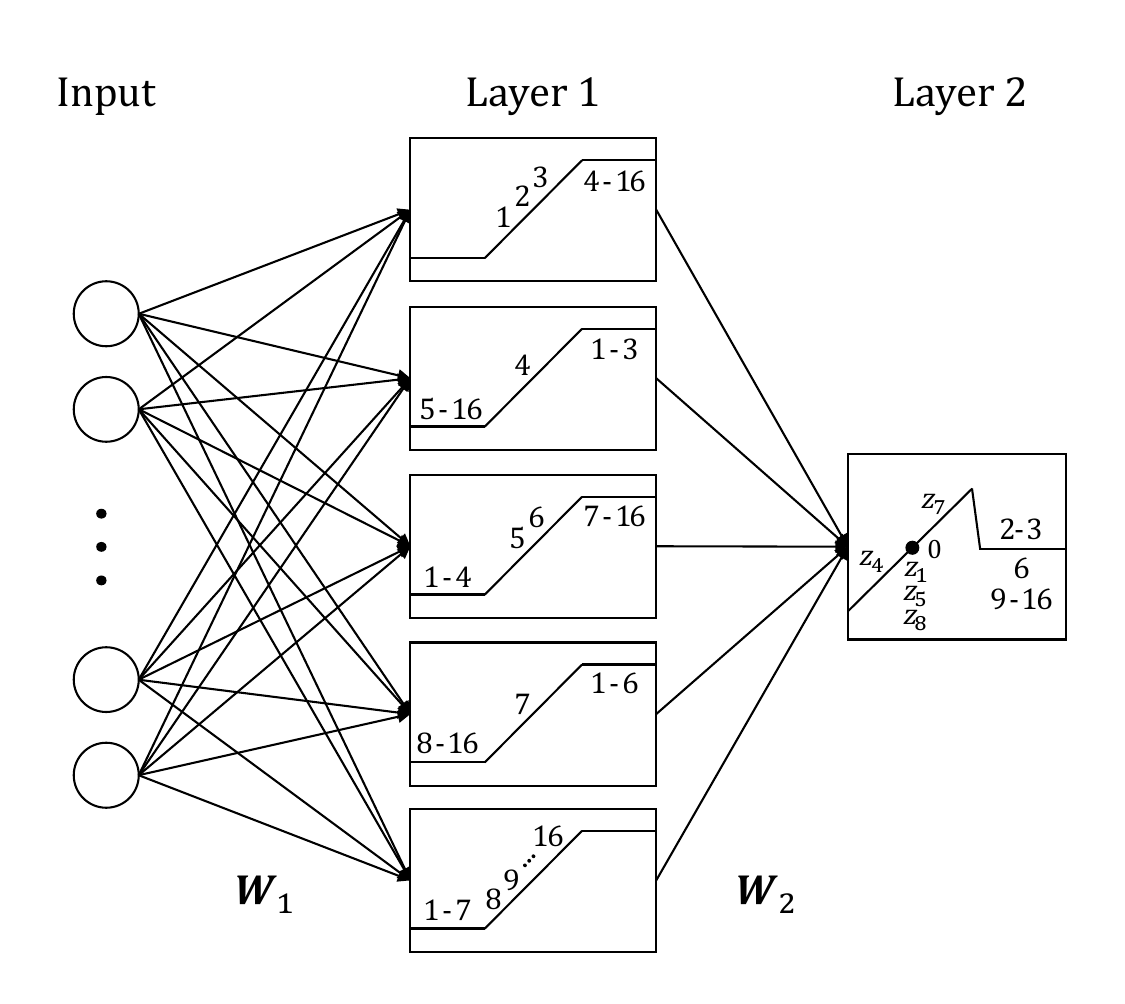}
    \vspace{-2mm}
    \caption{
    The construction of 3-layer network achieving upper bound $U_3 = 4N + 4$ in Sec.~\ref{sec:three_layer_upper}. Here, we consider the case when $K=16$, $N=2$ and $T = \{ 4,7 \}$.
    }
    \label{fig:3layer4N+3}
    \end{figure}

    Now we construct layer 2 as below. Let $S = \cup_{j = 1}^{2N+1} \{ s_j^{\op{min}} \}$ be the index set containing the minimum index of each group $s_j$ for $j \in [2N+1]$, which is $S = \{1,4,5,7,8\}$ in the above example. 
    Our goal is to construct $\mW_2 \in \mathbb{R}^{2N+1}$ and $b_2 \in \mathbb{R}$ such that a single node in layer 2 locates (1) $\{\vx_i\}_{i \in S}$ to the desired target $z_i$, and (2) $\{\vx_i\}_{i \notin S}$ to the clipping region $[-1,1]^c$. In other words, our desired conditions are
    \begin{align}\label{eqn:conditions}
        \alpha_1^2(\vx_i) &= z_i, \quad \forall i \in S, \\
        \alpha_1^2(\vx_i) &\in [-1, 1]^c, \quad \forall i \notin S.
    \end{align}
    Note that the input of the first node in the second layer is represented as
    \begin{align}\label{eqn:alpha_1^2}
        \alpha_1^2(\vx_i) = \sum_{j=1}^{2N+1} W_{2,j} \beta_j^1 (\vx_i) + b_2
    \end{align} 
    where $\mW_2 = [W_{2,1}; \cdots; W_{2,2N+1}]$ and
    \begin{align*}
        \beta_j^1 (\vx_i) = \sigma_H(\alpha_j^1 (\vx_i))
    \end{align*} 
    is the output of node $j$ in layer 1, when the input to the network is $\vx_i$.
    Thus, the first condition in Eq.~\ref{eqn:conditions}
    can be represented as a linear system 
    \begin{align}\label{eqn:linear_system}
        \mK \begin{bmatrix}
            \mW_2 \\
            b_2
        \end{bmatrix} 
        = \begin{bmatrix}
            z_{i_1} \\
            \vdots \\
            z_{i_{2N+1}}
        \end{bmatrix}
    \end{align}
    where $i_k$ for $k \in [2N+1]$ is defined the as the elements of $S = \{i_1, \cdots, i_{2N+1}\}$ and
    \begin{align}
        \mK = \begin{bmatrix}
           \beta_1^1 (\vx_{i_1}) & ... & \beta_{2N+1}^1 (\vx_{i_1}) & 1 \\
           \vdots & \ddots & \vdots & \vdots \\
           \beta_1^1 (\vx_{i_{2N+1}}) & ... & \beta_{2N+1}^1 (\vx_{i_{2N+1}}) & 1
        \end{bmatrix}.
    \end{align}
    In our above example, we have 
    \begin{align*}
        \mK = \begin{bmatrix}
           \beta_1^1 (\vx_1) &  +1 & -1 & +1 & -1 & 1 \\
           +1 &  \beta_2^1 (\vx_4) & -1 & +1 & -1 & 1 \\
           +1 &  -1 & \beta_3^1 (\vx_5) & +1 & -1 & 1 \\
           +1 &  -1 & +1 & \beta_4^1 (\vx_7)  & -1 & 1 \\
           +1 &  -1 & +1 & -1  & \beta_5^1 (\vx_8) & 1
        \end{bmatrix}
    \end{align*}
    Using a similar technique used in~\citep{yun2019small}, this matrix $\mK \in \mathbb{R}^{(2N+1) \times (2N+2)}$ satisfies two conditions:
    \begin{align*}
        & (1) \ \op{rank}(\mK) = 2N+1,  \\
        & (2) \ \exists \bold{\nu} = [\nu_1, \cdots, \nu_{2N+2}] \in \op{null}(\mK) \\
        &  \quad \quad  \text{ such that } \nu_i > 0 \quad \forall i \in [2N+1].
    \end{align*}
    Thus, the linear system in Eq.~\ref{eqn:linear_system} has infinitely many solution in the form of 
    \begin{align}\label{linear_equation}
        \begin{bmatrix}
        \mW_2 \\ b_2
        \end{bmatrix} = \mu + \lambda \nu
    \end{align}
    for any scalar $\lambda$ and a particular solution $\mu$. With the logic used in the proof of Lemma B.1 in~\citep{yun2019small}, we can scale $\lambda$ sufficiently such that the second condition in Eq.~\ref{eqn:conditions} holds. 
    Thus, by using such weight $\mW_2$ and bias $b_2$, the input of layer 2 looks like in Fig.~\ref{fig:3layer4N+3}. Let the neuron in layer 2 has activation function 
    \begin{align*}
        \sigma_T(t) =
        \begin{cases}
            t, & \text{ if } t < 1 \\
            -\frac{1}{\delta} (t-1-\delta), & \text{ if } 1 \le t < 1+\delta \\
            0, & \text{ if } t \ge 1+ \delta
        \end{cases}
    \end{align*}
    Then, the output of the network 
    \begin{align*}
    g_{\theta}(\vx) = \sigma_T(\mW_2 (\sigma_H( \mW_1 \vx + \vb_1 ) ) + b_2  ) = \sigma_T (\alpha_1^2 (\vx) )
    \end{align*}
    satisfies 
    \begin{align*}
        g_{\theta}(\vx_i) = 
        \begin{cases}
            z_i, & \quad i \in T \\
            0, & \quad i \in [K] \setminus T
        \end{cases}
    \end{align*}
    by using the definition of $\sigma_T$ and Eq.~\ref{eqn:conditions}. Note that the first layer of this construction uses $2N+1$ neurons with $\sigma_H$ activation, and the second layer uses 1 neuron with $\sigma_T$ activation. Since $\sigma_H$ and $\sigma_T$ can be converted into 2 ReLU neurons, our construction use total $4N+4$ neurons, which completes the proof.

\section{Proof of Proposition~\ref{prop:ext}}\label{sec:proof-ext}

Recall the 3-layer network illustrated in Figure~\ref{fig:3layersqrt6N}. For given $T \subset [K]$, let us denote this 3-layer network satisfying \eqref{eqn:finetune_fit_well} by $g_{\theta, T}$ which has maximum width $4 \min\{\sqrt{3|T|+2}, \sqrt{K} \}$. In what follows, we construct an $L$-layer network based on $g_{\theta, T}$.

We partition $T$ into $\left\lfloor\frac{L-1}{2}\right\rfloor$ subsets: $T_1, T_2, \cdots, T_{\left\lfloor\frac{L-1}{2}\right\rfloor}$ satisfying $|T_i| \leq |T|/ \left\lfloor\frac{L-1}{2}\right\rfloor + 1$ for all $i$. It can be easily seen that $g_{\theta, T_1}(x) + g_{\theta, T_2}(x) + \cdots + g_{\theta, T_{\left\lfloor\frac{L-1}{2}\right\rfloor}}(x)$ satisfies \eqref{eqn:finetune_fit_well}. Using the construction idea provided in Figure 2 of \citep{yun2019small}, the above function can be represented as an $L$-layer network, which has a maximum width less than or equal to
\begin{align}
    \max_{i} \{4 \min\{\sqrt{3|T_i|+2}, \sqrt{K} \} + 2\},
\end{align}
and we conclude.

\end{document}